\documentclass{article}

\usepackage[final,main]{neurips_2026}

\usepackage[utf8]{inputenc}
\usepackage[T1]{fontenc}
\usepackage{url}
\usepackage{booktabs}
\usepackage{microtype}
\usepackage{graphicx}
\usepackage{caption}
\usepackage{subcaption}
\usepackage{arydshln}
\usepackage{multirow}
\usepackage{rotating}
\usepackage{siunitx}
\usepackage{makecell}
\usepackage{bm}
\usepackage{amsmath}
\usepackage{amssymb}
\usepackage{mathtools}
\usepackage{amsthm}
\usepackage{gensymb}
\usepackage{enumitem}
\usepackage{algorithm}
\usepackage{algorithmic}
\usepackage{xcolor}
\usepackage{colortbl}
\usepackage{placeins}
\usepackage[pagebackref=true,breaklinks=true,colorlinks=true,urlcolor=magenta,bookmarks=false,citecolor=blue]{hyperref}
\usepackage{xspace}
\usepackage[capitalize,noabbrev]{cleveref}

\newcommand{\return}{\textbf{return }}
\newcommand{\Comment}[1]{\hfill $\triangleright$ #1}

\makeatletter
\DeclareRobustCommand\onedot{\futurelet\@let@token\@onedot}
\def\@onedot{\ifx\@let@token.\else.\null\fi\xspace}
\def\eg{\emph{e.g}\onedot}

\makeatother

\newcommand{\dgs}{\textit{d}GS\xspace}
\newcommand{\dbs}{\textit{d}BS\xspace}
\newcommand{\dgso}{\textit{d}GS-O\xspace}
\newcommand{\dfamily}{\textit{d}-splatting\xspace}

\newcommand{\spacedcdashline}[1]{\noalign{\vspace{2pt}}\cdashline{#1}\noalign{\vspace{3pt}}}

\theoremstyle{plain}
\newtheorem{theorem}{Theorem}[section]
\newtheorem{proposition}[theorem]{Proposition}

\newtheorem{corollary}[theorem]{Corollary}
\theoremstyle{definition}

\theoremstyle{remark}

\AddToHook{env/theorem/begin}{\crefalias{section}{theorem}}
\AddToHook{env/proposition/begin}{\crefalias{section}{proposition}}
\AddToHook{env/lemma/begin}{\crefalias{section}{lemma}}
\AddToHook{env/corollary/begin}{\crefalias{section}{corollary}}
\AddToHook{env/definition/begin}{\crefalias{section}{definition}}
\AddToHook{env/assumption/begin}{\crefalias{section}{assumption}}
\AddToHook{env/remark/begin}{\crefalias{section}{remark}}

\title{Direct Conditional Parameterization for N-Dimensional Splatting}

\author{%
  \textbf{Zhongpai Gao\thanks{Corresponding author.} \quad Benjamin Planche \quad Van Nguyen Nguyen} \\
  \textbf{Meng Zheng \quad Anwesa Choudhuri \quad Arun Innanje} \\
  \textbf{Terrence Chen \quad Ziyan Wu} \\
  United Imaging Intelligence, Boston, MA, USA
}

\begin{document}

\maketitle

\vspace{-1.5em}
\begin{center}
\includegraphics[width=1\linewidth]{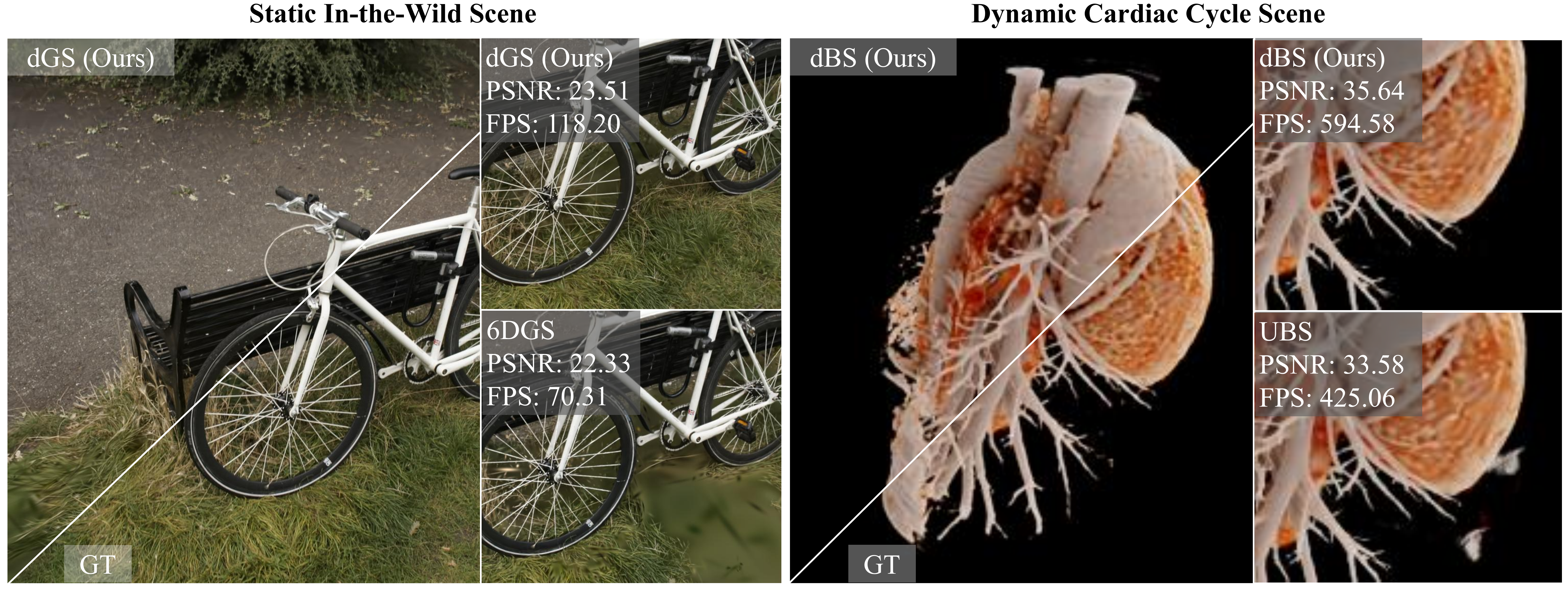}
\captionsetup{type=figure}
\captionof{figure}{\textbf{Rendering quality and efficiency comparison.} Our direct conditional parameterization yields two methods: \dgs{} (Gaussian kernel) outperforms its implicit-conditioning counterpart 6DGS \citep{gao20246dgs} on the static \texttt{bicycle} scene (left), and \dbs{} (Beta kernel) outperforms UBS \citep{liu2025universal} on the dynamic \texttt{heart1} scene (right). Both achieve higher quality and higher efficiency than their covariance-derived predecessors.}
\label{fig:teaser}
\end{center}
\vspace{-0.5em}

\begin{abstract}
   N-dimensional splatting extends 3D Gaussian Splatting with view direction and time to model view-dependent and dynamic effects. Each primitive is lifted into a joint distribution over position and conditioning variables, then conditionally sliced at each rendering query. N-DGS and UBS share this step across Gaussian and Beta kernels: recovering the conditional mean, opacity, and covariance requires per-primitive matrix inversions and regression products in every forward pass. We introduce direct conditional parameterization, replacing this covariance-derived readout with explicit per-effect parameters: a Cholesky precision factor for opacity and a factorized query-to-position regression operator with learnable per-dimension coupling. Unrestricted conditional coordinates recover a valid joint covariance; our factorization adds spatial-scale and coupling priors. Applied to both kernels, the parameterization yields \dgs{} (Gaussian) and \dbs{} (Beta). Across five static and dynamic benchmarks, they match or exceed their covariance-derived counterparts on quality, with gains up to +1.26 dB in the main MCMC evaluation. Production slicing is 6.9--7.7$\times$ faster than on-the-fly covariance-derived conditioning; complete-render profiling gives 1.25--1.91$\times$ speedups on four trained scenes. Direct conditioning provides a structured, kernel-agnostic alternative for N-dimensional splatting.
Project page: \url{https://gaozhongpai.github.io/dgs/}.
\end{abstract}

\section{Introduction}
\label{sec:introduction}
3D Gaussian Splatting (3DGS)~\citep{3dgs} achieves photorealistic novel-view synthesis at real-time speeds through explicit 3D Gaussian primitives. Static primitives, however, struggle with specular reflections, temporal dynamics, and view-dependent geometry.

Recent work extends 3DGS along two orthogonal axes. The first is \textit{kernel shape}: alternatives to the Gaussian kernel include Beta kernels~\citep{liu2025deformable,liu2025universal}, generalized exponentials~\citep{hamdi2024gesgeneralizedexponentialsplatting}, and Student-t distributions~\citep{zhu20253dstudentsplattingscooping}. The second is \textit{dimensionality}: N-dimensional Gaussians (N-DG)~\citep{diolatzis2024n}, refined as 6DGS~\citep{gao20246dgs} and 7DGS~\citep{gao20257dgs} (collectively N-DGS), lift each primitive into a joint distribution over 3D position and conditioning variables (view direction with $C{=}3$, view+time with $C{=}4$). At render time, the $(3{+}C)$-dimensional primitive is \textit{conditionally sliced} at the current query (a view direction or timestamp) using the conditional Gaussian formulas: the mean shifts, the opacity is modulated by a kernel evaluated at the query, and the spatial covariance receives a correction term. Universal Beta Splatting (UBS)~\citep{liu2025universal} combines the two axes by replacing the Gaussian opacity decay with an anisotropic Beta kernel while inheriting the same conditional-slicing structure. N-DGS and UBS together represent the leading N-dimensional splatting formulations across Gaussian and Beta kernels.

Despite their different kernels, N-DGS and UBS share the same conditional-slicing step, and this is where the per-primitive cost concentrates. Read-out of the three effects requires inverting the query block, constructing a regression matrix, and applying a covariance-correction product (Table~\ref{tab:comparison_overview}). These operations run once per primitive in every forward pass, so the total cost grows linearly with primitive count, which on large scenes like Mip-NeRF 360 reaches millions. Their ambient Cholesky coordinates also entangle the optimization of opacity precision, position regression, and conditional covariance, and conditioning can become numerically fragile when the query block is ill-conditioned.

\textbf{Our insight.} N-DGS uses ambient joint-covariance coordinates, although rendering needs only conditional spatial covariance, query precision, and query-to-position regression. Recovering them requires inversion and Schur complementation. Appendix~\ref{sec:conditional_coordinates} proves that this unrestricted operational tuple is a smooth, globally bijective coordinate chart for joint Gaussian covariances, with block-triangular reconstruction, an explicit inverse, and a nonvanishing coordinate-volume Jacobian throughout the SPD interior. \dgs{} directly parameterizes a structured factorization of these rendering-aligned coordinates. It therefore both removes expensive operations and separates position from opacity coupling.

We introduce \textbf{direct conditional parameterization} for N-dimensional splatting. Opacity uses a directly parameterized Cholesky precision factor, and position uses a factorized regression operator with learnable per-dimension coupling that decouples position from opacity by default and recovers full coupling on demand. Because the change touches only the conditioning step, the same parameterization is kernel-agnostic and carries across kernel families: we instantiate it as \dgs{} (Gaussian) for the direct counterpart of N-DGS and \dbs{} (Beta) for the direct counterpart of UBS. Pairing each direct method with its covariance-derived counterpart (\dgs{} vs.\ N-DGS, \dbs{} vs.\ UBS) isolates the contribution of direct conditioning from the choice of kernel.

Our contributions are:

\begin{itemize}[leftmargin=*]
\item \textbf{Direct conditional parameterization} for N-dimensional splatting: a kernel-agnostic alternative that explicitly parameterizes opacity precision and a factorized query-to-position regression operator. We prove that the underlying unrestricted operational tuple forms a smooth, globally bijective coordinate chart for joint Gaussian covariances, with an explicit inverse and a nonvanishing coordinate-volume Jacobian.

\item \textbf{Two kernel instantiations and matched-pair evaluation.} We develop the formulation for the Gaussian kernel (\dgs{}, the direct counterpart of N-DGS) and show that the same conditioning extends to the Beta kernel via a Beta-kernel instantiation (\dbs{}, the direct counterpart of UBS). Pairing each direct method with its covariance-derived counterpart isolates the gain from direct conditioning from the choice of kernel.

\item \textbf{Efficient CUDA implementation.} Fused, register-level CUDA kernels for direct conditioning and the N-DGS/UBS baselines achieve 6.9--7.7$\times$ faster production slicing than on-the-fly covariance-derived conditioning. Inference caching controls and complete-render profiling quantify the resulting speedups. The N-DGS baseline was previously only available as a PyTorch implementation; we provide an optimized CUDA comparison.
\end{itemize}

\section{Related Work}
\label{sec:related_work}

\textbf{3D Gaussian Splatting.}
3D Gaussian Splatting (3DGS)~\citep{3dgs} achieves real-time novel view synthesis through explicit 3D Gaussian primitives with spherical harmonic color encoding, and has inspired a wide range of extensions to anti-aliasing~\citep{yu2023mipsplattingaliasfree3dgaussian,yan2024multiscale3dgaussiansplatting}, densification~\citep{kheradmand20243dgaussiansplattingmarkov}, compression~\citep{lee2024compact,morgenstern2024compact}, and semantic modeling~\citep{shi2023languageembedded3dgaussians,zhou2024feature3dgssupercharging3d}. The original formulation remains static, relying on spherical harmonics alone for view dependence and offering no mechanism for temporal dynamics or view-dependent geometry.

\textbf{Alternative Kernel Designs.}
Several works replace the 3D Gaussian kernel with alternatives for better geometric fidelity: generalized exponentials~\citep{hamdi2024gesgeneralizedexponentialsplatting}, truncated~\citep{liu2025tnt}, second-order~\citep{zhang2025quadratic}, half-Gaussian~\citep{li20253d}, and discontinuity-aware~\citep{qu2024disc} variants; Beta~\citep{liu2025deformable}, Student-t~\citep{zhu20253dstudentsplattingscooping}, and Gabor~\citep{zhou20253dgabsplat3dgaborsplatting} distributions; and alternative primitives such as convex shapes~\citep{held20243dconvexsplattingradiance}, triangles~\citep{held2025triangle}, and radiance meshes~\citep{radiancemesh}. Orthogonal to kernel shape, Textured Gaussians~\citep{chao2025textured} models spatially varying color and opacity, while Spherical Voronoi~\citep{di2025spherical} provides a directional appearance representation. These works modify primitives or appearance models rather than replacing covariance-derived conditional slicing; UBS combines the Beta kernel with N-dimensional conditioning, discussed next.

\textbf{N-Dimensional Splatting.}
To model view-dependent and temporal effects, recent methods lift primitives into higher-dimensional spaces, conditioning on view direction or time alongside position. 4DGS~\citep{yang2023real} adds a temporal dimension for dynamic scene reconstruction, and N-DG~\citep{diolatzis2024n} establishes a general framework for arbitrary-dimensional Gaussians. 6DGS~\citep{gao20246dgs} jointly models spatial position and view direction; 7DGS~\citep{gao20257dgs} adds time as well; we refer to the two collectively as N-DGS. Universal Beta Splatting (UBS)~\citep{liu2025universal} extends the same paradigm to Beta kernels with per-dimension bandwidth control. N-DGS and UBS together cover the two kernel families currently used in N-dimensional splatting, and both inherit the covariance-derived conditioning step that our work replaces.

\textbf{Dynamic Scene Modeling.}
NeRF-based methods model dynamics through deformations: D-NeRF~\citep{pumarola2020dnerfneuralradiancefields} learns a canonical scene and a time-dependent deformation field, Nerfies~\citep{park2021nerfies} handles non-rigid motion, and HyperNeRF~\citep{park2021hypernerf} captures topological changes. For Gaussian Splatting, 4DGS~\citep{yang2023real} slices native spatiotemporal primitives for real-time rendering. Dynamic 3D Gaussians~\citep{luiten2023dynamic} optimizes persistent primitives' motion and rotation under local-rigidity constraints, while FreeTimeGS~\citep{wang2025freetimegs} allows Gaussians to appear at arbitrary times and locations with motion functions. Tracked and motion-based representations evolve 3D primitives explicitly; N-dimensional splatting instead obtains query-dependent primitives by conditional slicing.

\section{Preliminary}

\paragraph{3D Gaussian Splatting}
3DGS~\citep{3dgs} represents a scene as a collection of anisotropic 3D Gaussians. Each primitive carries a mean $\boldsymbol{\mu}\in\mathbb{R}^3$, a covariance $\boldsymbol{\Sigma}\in\mathbb{R}^{3\times 3}$ factored as $\boldsymbol{\Sigma}=\mathbf{R}\mathbf{S}\mathbf{S}^\top\mathbf{R}^\top$ with rotation $\mathbf{R}$ (quaternion) and diagonal scale $\mathbf{S}=\operatorname{diag}(s_x,s_y,s_z)$, an opacity $\alpha\in[0,1]$, and view-dependent color via spherical harmonics. At render time, primitives are projected to 2D screen-space Gaussians, sorted by depth per tile, and alpha-composited front-to-back to produce the final pixel color. The per-pixel contribution of each primitive is $\alpha\cdot\mathcal{K}_G(\boldsymbol{\delta};\boldsymbol{\Sigma}_{\text{2D}}) = \alpha\cdot\exp\!\bigl(-\tfrac{1}{2}\boldsymbol{\delta}^\top\boldsymbol{\Sigma}_{\text{2D}}^{-1}\boldsymbol{\delta}\bigr)$, where $\boldsymbol{\delta}$ is the offset from the pixel to the projected primitive center, $\boldsymbol{\Sigma}_{\text{2D}}$ is the projected 2D covariance, and $\mathcal{K}_G$ denotes the Gaussian kernel. The representation is static: it has no mechanism for temporal dynamics or view-dependent geometry.

\paragraph{N-Dimensional Splatting}
To capture view-dependent and temporal effects, N-DGS~\citep{gao20246dgs,gao20257dgs} (Gaussian kernel) and UBS~\citep{liu2025universal} (Beta kernel) parameterize each primitive as a joint distribution over position $\mathbf{X}_p\in\mathbb{R}^3$ and query variables $\mathbf{X}_q\in\mathbb{R}^C$ (view direction for $C=3$, view+time for $C=4$):
\begin{align}
\begin{pmatrix} \mathbf{X}_p \\ \mathbf{X}_q \end{pmatrix} \sim \left(
\begin{pmatrix}
\boldsymbol{\mu}_p \\ \boldsymbol{\mu}_q
\end{pmatrix},
\begin{pmatrix}
\boldsymbol{\Sigma}_{pp} & \boldsymbol{\Sigma}_{pq} \\
\boldsymbol{\Sigma}_{pq}^\top & \boldsymbol{\Sigma}_{qq}
\end{pmatrix}
\right).
\end{align}
At query $\mathbf{q}$, N-DGS derives conditional position, spatial covariance, and opacity as
\begin{align}
\boldsymbol{\mu}_{\text{cond}} &= \boldsymbol{\mu}_p + \boldsymbol{\Sigma}_{pq}\,\boldsymbol{\Sigma}_{qq}^{-1}(\mathbf{q}-\boldsymbol{\mu}_q), \label{eq:prelim_mu}\\
\boldsymbol{\Sigma}_{\text{cond}} &= \boldsymbol{\Sigma}_{pp} - \boldsymbol{\Sigma}_{pq}\,\boldsymbol{\Sigma}_{qq}^{-1}\,\boldsymbol{\Sigma}_{pq}^\top, \label{eq:prelim_sigma}\\
\alpha_{\text{cond}} &= \alpha \cdot \mathcal{K}\!\left(\mathbf{q}-\boldsymbol{\mu}_q;\,\boldsymbol{\Sigma}_{qq}\right). \label{eq:prelim_alpha}
\end{align}
$\mathcal{K}(\cdot;\boldsymbol{\Sigma}_{qq})$ is evaluated through the query precision $\boldsymbol{\Sigma}_{qq}^{-1}$. N-DGS uses $\mathcal{K}_{G}(\boldsymbol{\delta};\boldsymbol{\Sigma}_{qq}) = \exp(-\lambda\,\boldsymbol{\delta}^\top \boldsymbol{\Sigma}_{qq}^{-1}\boldsymbol{\delta})$ with scaling $\lambda$ (default 0.35). UBS retains the covariance-derived structure with a Beta opacity kernel and bandwidth-dependent mean/covariance coupling (Appendix~\ref{sec:dbs}). Both require query-block inversion, regression construction, and covariance correction; our direct formulation exposes the rendered quantities instead.

\begin{figure*}[t]
\centering

\includegraphics[width=0.99\linewidth]{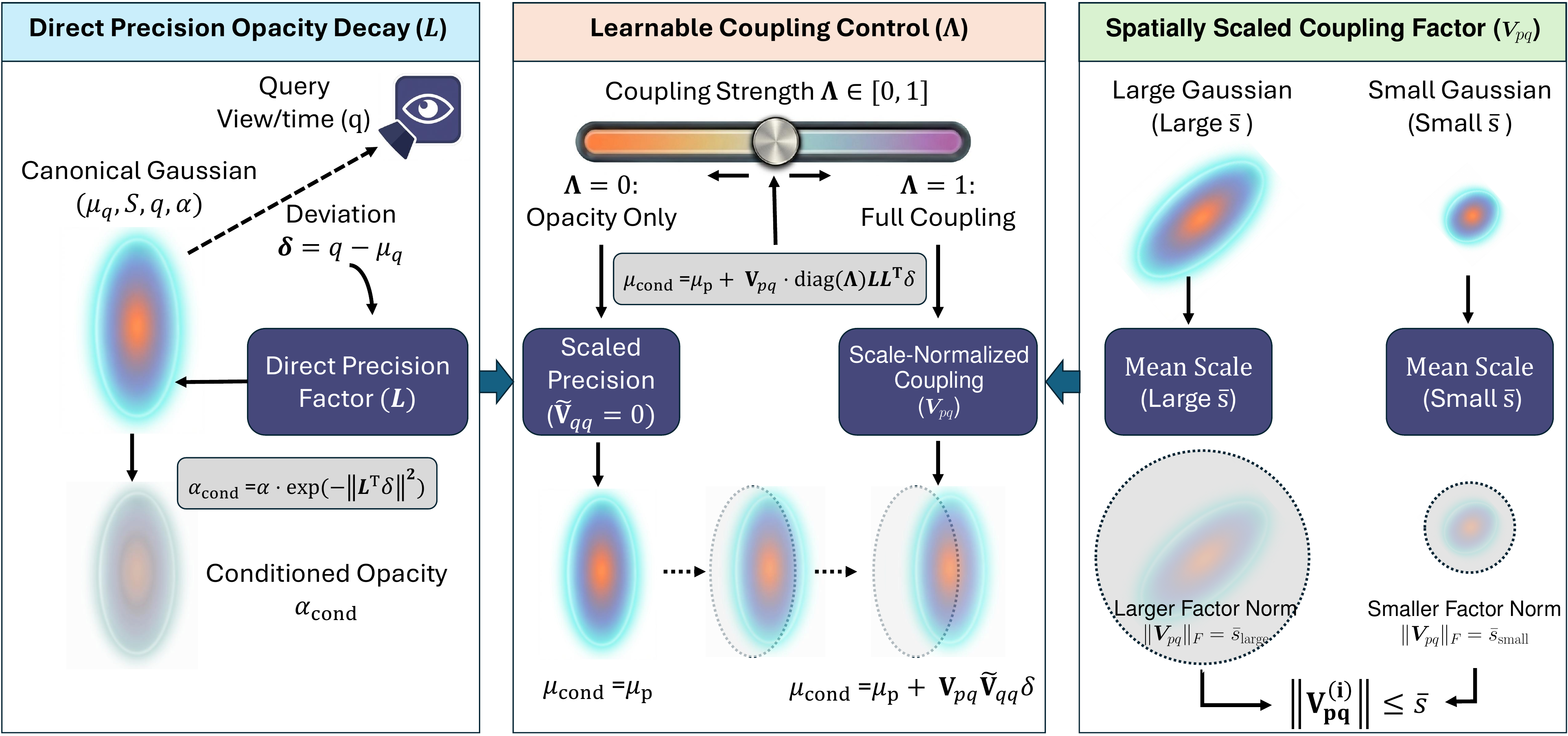}
\caption{Overview of the \dgs{} pipeline. Given query deviation $\boldsymbol{\delta} = \mathbf{q} - \boldsymbol{\mu}_q$, opacity uses Cholesky factor $\mathbf{L}$, while position uses the scale-normalized coupling factor $\mathbf{V}_{pq}$ and scaled precision $\tilde{\mathbf{V}}_{qq}$ with per-dimension coupling $\boldsymbol{\Lambda}$. The factor norm follows the primitive scale; the complete regression and final shift also depend on $\tilde{\mathbf{V}}_{qq}$ and $\boldsymbol{\delta}$. The same pipeline extends to \dbs{} (\cref{subsec:dbs}).}
\label{fig:architecture}
\vspace{-1em}
\end{figure*}

\section{Our Approach}

We replace the covariance-derived conditioning step with three explicit parameters per primitive: a Cholesky factor $\mathbf{L}$ for opacity precision, a spatial coupling factor $\mathbf{V}_{pq}$ for position, and a per-dimension coupling vector $\boldsymbol{\Lambda}$ that ties the two. We develop the formulation for the Gaussian kernel below, yielding \dgs{}, and show in \cref{subsec:dbs} that swapping in the Beta opacity decay yields \dbs{}.

\subsection{Overview}
\label{subsec:overview}

Given the deviation $\boldsymbol{\delta} = \mathbf{q} - \boldsymbol{\mu}_q$ from the canonical view/time, \dgs{} replaces \cref{eq:prelim_mu,eq:prelim_alpha} with two directly parameterized components:
\begin{align}
\alpha_{\text{cond}} &= \alpha \cdot \exp\!\left(-\lambda_o\,\|\mathbf{L}^\top\boldsymbol{\delta}\|_2^2\right), & &\text{(Opacity)} \label{eq:direct_alpha}\\
\boldsymbol{\mu}_{\text{cond}} &= \boldsymbol{\mu}_p + \mathbf{V}_{pq}\,\mathrm{diag}(\boldsymbol{\Lambda})\,\mathbf{V}_{qq}\,\boldsymbol{\delta}. & &\text{(Position)} \label{eq:direct_mu}
\end{align}
The parameterization is defined by three learned quantities ($\mathbf{L}$, $\mathbf{V}_{pq}$, $\boldsymbol{\Lambda}$) and one fixed constant ($\lambda_o{=}0.35$, the opacity scaling factor inherited from N-DGS). $\mathbf{L}\in\mathbb{R}^{C\times C}$ is a lower-triangular Cholesky factor with an exponential activation on the diagonal, so that $\mathbf{V}_{qq}=\mathbf{L}\mathbf{L}^\top$ is symmetric positive-definite by construction and \emph{is} the precision matrix without any inversion. $\mathbf{V}_{pq}=\bar{s}\boldsymbol{\Theta}_{pq}\in\mathbb{R}^{3\times C}$ is a spatial coupling factor, written as the average spatial scale $\bar{s}=\mathrm{mean}(s_x,s_y,s_z)$ times a learnable direction $\boldsymbol{\Theta}_{pq}$ with unit Frobenius norm. This gives $\|\mathbf{V}_{pq}\|_F=\bar{s}$; the effective regression also depends on $\operatorname{diag}(\boldsymbol{\Lambda})\mathbf{V}_{qq}$. $\boldsymbol{\Lambda}\in[0,1]^C$ weights the precision-transformed query channels in the position update: at $\boldsymbol{\Lambda}=\mathbf{0}$ position decouples entirely (recovering an opacity-only variant we denote \dgso{}, with $\boldsymbol{\mu}_{\text{cond}}=\boldsymbol{\mu}_p$), at $\boldsymbol{\Lambda}=\mathbf{1}$ it couples through the full opacity precision $\mathbf{V}_{qq}$, and intermediate values continuously weight these channels. Channels of the same type (\eg, the three view dimensions) share a logit.

The rendered conditional covariance $\boldsymbol{\Sigma}_{\mathrm{cond}}=\mathbf{R}\mathbf{S}\mathbf{S}^\top\mathbf{R}^\top$ remains query-independent and unchanged from 3DGS, preserving compatibility with existing densification strategies. It differs from the spatial marginal of a reconstructed joint Gaussian: Appendix~\ref{sec:conditional_coordinates} gives the exact correspondence for unrestricted conditional coordinates. The implemented factorization constrains the regression through spatial normalization and bounded coupling, changing its optimization geometry and priors.

\begin{table}[t!]
\centering
\caption{Per-primitive operations and production slicing-only forward time on A100 (1M primitives). Direct methods avoid query-block inversion, regression construction, and covariance correction. Parentheses compare optimized on-the-fly conditioning; Appendix~\ref{app:efficiency-controls} reports separate cached and factorized inference controls.}
\label{tab:comparison_overview}
\setlength{\tabcolsep}{3pt}
\resizebox{0.85\textwidth}{!}{%
\begin{tabular}{@{}l|cc|cc@{}}
\toprule
& \multicolumn{2}{c|}{\textbf{Implicit}} & \multicolumn{2}{c}{\textbf{Direct (ours)}} \\
\textbf{Operation} & \textbf{N-DGS} & \textbf{UBS} & \textbf{\dgs{}} & \textbf{\dbs{}} \\
\midrule
Invert $\boldsymbol{\Sigma}_{qq}$ & $\boldsymbol{\Sigma}_{qq}^{-1}$ & $\boldsymbol{\Sigma}_{qq}^{-1}$ & --- & --- \\
Regr.\ matrix $\mathbf{P}$ & $\boldsymbol{\Sigma}_{pq}\boldsymbol{\Sigma}_{qq}^{-1}$ & $\boldsymbol{\Sigma}_{pq}\boldsymbol{\Sigma}_{qq}^{-1}\!\mathrm{diag}(\boldsymbol{\Lambda}_\beta)$ & --- & --- \\
Cov.\ correction & $\mathbf{P}\,\boldsymbol{\Sigma}_{pq}^\top$ & $\mathbf{P}\,\boldsymbol{\Sigma}_{pq}^\top$ & --- & --- \\
Position shift & $\mathbf{P}\,\boldsymbol{\delta}$ & $\mathbf{P}\,\boldsymbol{\delta}$ & $\mathbf{V}_{pq}\mathrm{diag}(\boldsymbol{\Lambda})\mathbf{V}_{qq}\boldsymbol{\delta}$ & $\mathbf{V}_{pq}\mathrm{diag}(\boldsymbol{\Lambda}_\beta)\mathbf{V}_{qq}\boldsymbol{\delta}$ \\
Opacity & $\mathcal{K}_G(\boldsymbol{\delta}^\top\!\boldsymbol{\Sigma}_{qq}^{-1}\!\boldsymbol{\delta})$ & $\mathcal{K}_B(\hat{\mathbf{L}}_q^{-1}\!\boldsymbol{\delta};\boldsymbol{\beta})$ & $\mathcal{K}_G(\|\mathbf{L}^\top\!\boldsymbol{\delta}\|^2)$ & $\mathcal{K}_B(\mathbf{L}^\top\!\boldsymbol{\delta};\boldsymbol{\beta})$ \\
\midrule
Slicing time, $C{=}3$ (ms) & 0.71 & 0.67 & \textbf{0.10} ($\bm{7.1\times}$) & \textbf{0.10} ($\bm{6.9\times}$) \\
Slicing time, $C{=}4$ (ms) & 0.96 & 1.34 & \textbf{0.12} ($\bm{7.7\times}$) & \textbf{0.13} ($\bm{7.2\times}$) \\
\bottomrule
\end{tabular}%
}
\vspace{-1em}
\end{table}

Table~\ref{tab:comparison_overview} contrasts the per-primitive operations: N-DGS constructs $\mathbf{P}=\boldsymbol{\Sigma}_{pq}\boldsymbol{\Sigma}_{qq}^{-1}$ through query-block inversion and computes a covariance correction; UBS additionally applies bandwidth-dependent coupling. Direct methods apply their regression operator without deriving these quantities. The coupling $\boldsymbol{\Lambda}$ adds explicit control: $\boldsymbol{\Lambda}=\mathbf{1}$ uses the full direct precision path and $\boldsymbol{\Lambda}=\mathbf{0}$ recovers \dgso{}. Intermediate values let each query type learn a partial coupling strength.

\subsection{Opacity Conditioning}

The Cholesky factor $\mathbf{L}$ is parameterized directly, with exponential activation on the diagonal so that $\mathbf{V}_{qq}=\mathbf{L}\mathbf{L}^\top$ is symmetric positive-definite. Evaluating opacity reduces to a triangular matrix-vector product $\mathbf{L}^\top\boldsymbol{\delta}$ followed by a squared norm, avoiding query-block inversion.

For $C=4$ (view+time), all ten lower-triangular entries of $\mathbf{L}$ are learned. View--time off-diagonals start at zero but remain unconstrained. With $\boldsymbol{\delta}=(\boldsymbol{\delta}_v,\delta_t)^\top$, opacity is
\begin{equation}
\alpha_{\text{cond}} = \alpha \cdot \exp\!\left(-\lambda_o\left(\boldsymbol{\delta}_v^\top \mathbf{V}_{vv} \boldsymbol{\delta}_v + 2\delta_t\boldsymbol{\delta}_v^\top\mathbf{v}_{vt} + \delta_t^2 v_t\right)\right),
\end{equation}
where $\mathbf{V}_{vv}$, $\mathbf{v}_{vt}$, and $v_t$ are the view, cross, and time blocks of the full precision $\mathbf{V}_{qq}$.

\subsection{Position Conditioning}
\label{subsec:position}

The position update~\eqref{eq:direct_mu} factors into the spatial coupling factor $\mathbf{V}_{pq}$, the per-dimension coupling $\mathrm{diag}(\boldsymbol{\Lambda})$, and the precision $\mathbf{V}_{qq}$ shared with opacity:
\begin{equation*}
\boldsymbol{\mu}_{\text{cond}} = \boldsymbol{\mu}_p + \underbrace{\bar{s}\,\boldsymbol{\Theta}_{pq}}_{\mathbf{V}_{pq}}\,\underbrace{\mathrm{diag}(\boldsymbol{\Lambda})\,\mathbf{V}_{qq}}_{\tilde{\mathbf{V}}_{qq}}\,\boldsymbol{\delta}.
\end{equation*}
The coupling $\boldsymbol{\Lambda}$ interpolates between two extremes per primitive: at $\boldsymbol{\Lambda}=\mathbf{0}$, position is query-independent (recovering \dgso{}); at $\boldsymbol{\Lambda}=\mathbf{1}$, position and opacity share the full precision $\mathbf{V}_{qq}$, mirroring N-DGS's coupling structure. Precision-transformed query channels can learn different coupling strengths.

With $\boldsymbol{\Lambda}=\mathbf{0}$, opacity conditioning remains active
under \cref{eq:direct_alpha}. A primitive can therefore be suppressed under an
unfavorable view without moving its center; stronger coupling permits
query-dependent motion with the same base spatial covariance.

\textbf{Spatial scaling.} We set $\mathbf{V}_{pq}=\bar{s}\,\boldsymbol{\Theta}_{pq}$ with $\|\boldsymbol{\Theta}_{pq}\|_F=1$, giving $\|\mathbf{V}_{pq}\|_F=\bar{s}$ and $\|\mathbf{V}_{pq}^{(i)}\|_2\le\bar{s}$. This is an explicit spatial-scale prior on the coupling factor, not a bound on the final shift: the effective regression is $\mathbf{M}_{\mathrm{dGS}}=\mathbf{V}_{pq}\operatorname{diag}(\boldsymbol{\Lambda})\mathbf{V}_{qq}$ and the shift also depends on the query deviation. N-DGS instead obtains its regression operator indirectly from shared covariance blocks. Appendix~\ref{sec:spatial_scaling_proof} derives both forms and a basis-independent Gaussian bound; the ablation in \cref{tab:ablation} confirms the value of the prior, as removing $\bar{s}$ degrades PSNR by 2.65 dB on NeRF Synthetic.
The mean scale follows primitive size symmetrically across principal axes, while query precision and $\boldsymbol{\Lambda}$ retain separate control of opacity and position coupling.

\subsection{Extension to Beta Kernel: \dbs{}}
\label{subsec:dbs}

Direct conditioning touches only the slicing step (precision read-out, regression, and coupling), making it kernel-agnostic: any opacity decay that consumes the precision-Cholesky distance $\mathbf{L}^\top\boldsymbol{\delta}$ can be substituted in. We instantiate this with a Beta opacity decay following UBS~\citep{liu2025universal} to obtain \dbs{} (direct Beta Splatting), the direct counterpart of UBS.

We follow the released UBS implementation used in our experiments. UBS evaluates opacity as $\alpha\prod_i(1-\tanh((\hat{\mathbf{L}}_q^{-1}\boldsymbol{\delta})_i^2))^{\beta_i}$, with $\hat{\mathbf{L}}_q\hat{\mathbf{L}}_q^\top=\boldsymbol{\Sigma}_{qq}$. \dbs{} replaces this triangular solve with the direct product $\mathbf{z}=\mathbf{L}^\top\boldsymbol{\delta}$ while retaining the same bandwidth activation and coupling convention:
\begin{align}
\alpha_{\text{cond}} &= \alpha \cdot \prod_{i=1}^{C} \bigl(1-\tanh(z_i^2)\bigr)^{\beta_i}, \label{eq:dbs_alpha}\\
\boldsymbol{\mu}_{\text{cond}} &= \boldsymbol{\mu}_p + \mathbf{V}_{pq}\,\mathrm{diag}(\boldsymbol{\Lambda}_\beta)\,\mathbf{V}_{qq}\,\boldsymbol{\delta}, \label{eq:dbs_mu}
\end{align}
where $\beta_i=4\exp(\beta_i^{\mathrm{raw}})$ and $\boldsymbol{\Lambda}_\beta=\min(\boldsymbol{\beta}/4,\mathbf{1})$, as in UBS. The position update reuses the opacity bandwidth for coupling instead of a separate learned $\boldsymbol{\Lambda}$. The precision factor, spatial factor, and directly stored conditional covariance transfer from \dgs{}. Including the spatial Beta parameter, UBS and \dbs{} each use 19/27 conditioning parameters for $C=3/4$; N-DGS and \dgs{} use 15/16 and 22/24, respectively. Appendix~\ref{sec:dbs} gives the source-aligned UBS formulation.

The shared bandwidth gives opacity and position a consistent response to the
query. Dividing $\boldsymbol{\beta}$ by four puts the coupling near unit
magnitude when its raw parameter is near zero, while clipping prevents a
larger bandwidth from increasing the coupling beyond one. Gaussian and Beta
instances thus retain the same direct conditional structure, with a separate
learned coupling in \dgs{} and a bandwidth-derived coupling in \dbs{}.

\subsection{CUDA Implementation}

The original N-DGS methods (6DGS~\citep{gao20246dgs}, 7DGS~\citep{gao20257dgs}) rely on PyTorch's autograd for the conditional Gaussian operations, so for a fair comparison we implement \dgs{} and the N-DGS baseline with custom CUDA kernels. The kernel fuses the conditional position and opacity computation in a single pass (Algorithm~\ref{alg:slice} in Appendix~\ref{sec:algorithms}), and \dbs{} reuses the same pipeline with the opacity decay swapped (Algorithm~\ref{alg:slice_dbs}).

\textbf{Kernel design.} The kernel is templated on $C$ ($3$ or $4$) with compile-time loop unrolling, and each thread processes one primitive. The triangular product $\mathbf{L}^\top\boldsymbol{\delta}$ uses only $C(C+1)/2$ multiplications followed by a dot product, all in per-thread registers; the small $3\times C$ and $C\times C$ matrices fit entirely in registers, so the kernel allocates no intermediate tensors and avoids shared-memory traffic.
The N-DGS baseline also fuses inversion, regression construction, and Schur
correction, using a cofactor inverse for $C=3$ and register-local
Gauss--Jordan elimination for $C=4$.

\begin{figure*}[!t]
    \centering
\includegraphics[width=1.0\linewidth]{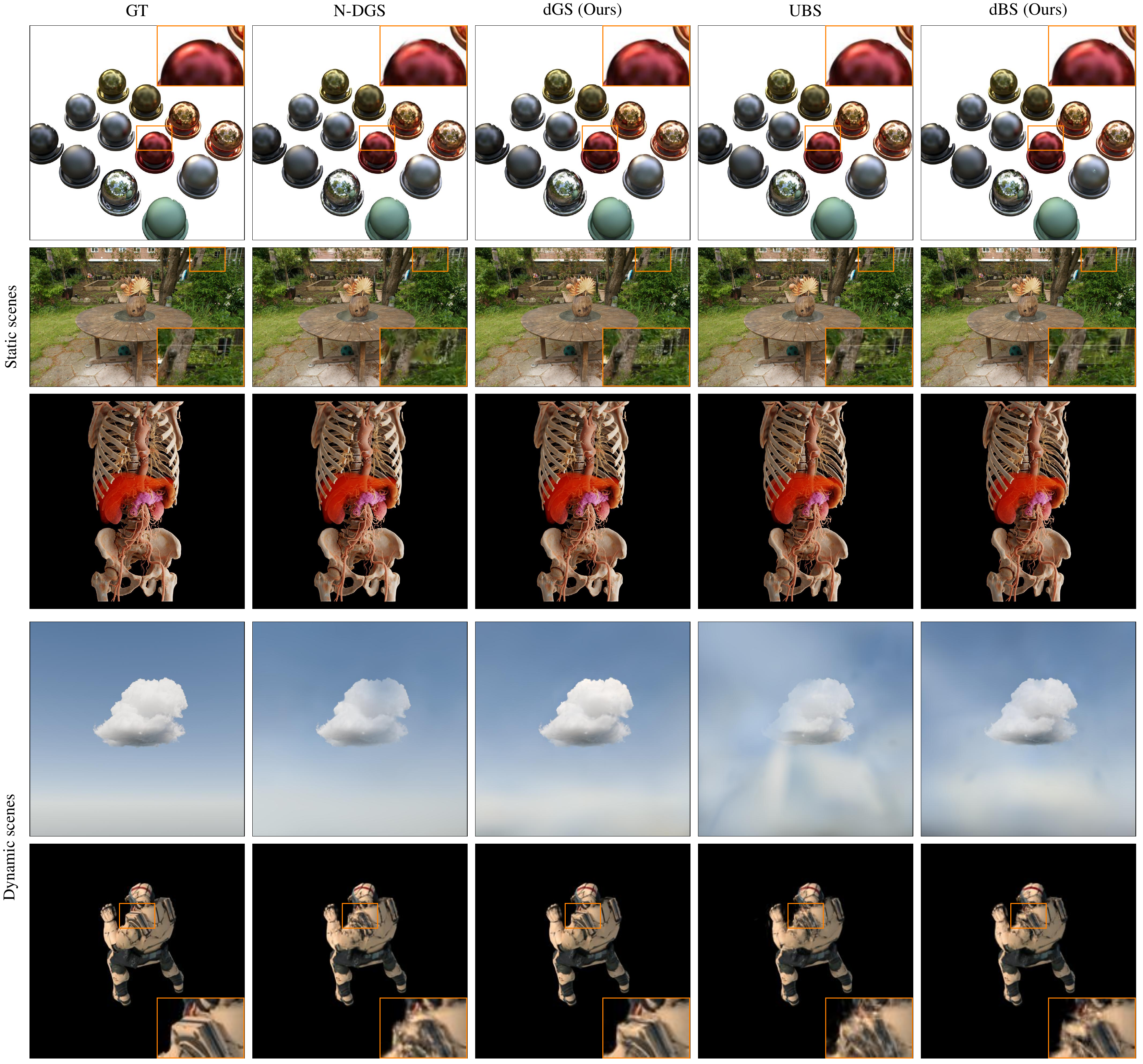}
    \caption{Qualitative comparison on static and dynamic scenes. Insets use identical crop coordinates and magnification across methods; boxes mark the source regions.}
    \label{fig:qualitative}
    \vspace{-1em}
\end{figure*}

\textbf{Kernel benchmarks.} Table~\ref{tab:comparison_overview} reports production CUDA slicing at 1M primitives on an A100, excluding autograd and rasterization. Direct conditioning is \textbf{6.9--7.7$\times$ faster} than optimized on-the-fly conditioning. In a separate harness with materialized inputs, \dgs{} remains 2.0--2.3$\times$ faster than inference-cached N-DGS and 1.8$\times$ faster than a factorized-query control (Appendix~\ref{app:efficiency-controls}). Profiling four trained scenes gives 1.25--1.91$\times$ complete-render gains: shared rasterizer code still processes different learned footprints and tile workloads, limiting the benefit of faster slicing.

\begin{table*}[t]
\centering
\caption{MCMC results: scene means from one run per scene (\#Gauss in K, training time in minutes; dynamic batch size 1). Pairs share per-scene primitive caps; realized counts are reported. FPS and FPS$_{\text{CUDA}}$ are arithmetic scene means using PyTorch and fused CUDA slicing.}
\label{tab:mcmc_comparison}
\resizebox{0.9\textwidth}{!}{%
\begin{tabular}{@{}ccl|rrrr|ccc@{}}
\toprule
& Dataset & Method & \#Gauss & Time $\downarrow$ & FPS $\uparrow$& FPS$_{\text{CUDA}}$ $\uparrow$& PSNR $\uparrow$ & SSIM $\uparrow$ & LPIPS $\downarrow$ \\
\midrule
\multirow{12}{*}{\rotatebox{90}{STATIC}}
& \multirow{4}{*}{\makecell{\texttt{NeRF}\\\texttt{Synthetic}}}
& N-DGS & 300 & 10.43 & 183.58 & 348.95 & 34.04 & 0.973 & 0.027 \\
&& \textit{d}GS (\textbf{Ours}) & 300 & \bf 10.26 & \bf 245.18 & \bf 593.94 &  \bf 34.34 &  \bf 0.973 &  \bf 0.026 \\
\spacedcdashline{3-10}
&& UBS & 300 & 6.80 & 155.96 & 306.33 & 34.48 & 0.975 & 0.026 \\
&& \textit{d}BS (\textbf{Ours}) & 300 &  \bf5.48 & \bf 239.09 &  \bf 415.66 & \bf 34.53 & \bf 0.975 & \bf 0.025 \\
\cmidrule{2-10}
& \multirow{4}{*}{\makecell{\texttt{Mip-NeRF}\\\texttt{360}}}
& N-DGS & 2,852 & 51.23 & 35.69 & 75.31 & 26.88 & 0.775 & 0.263 \\
&& \textit{d}GS (\textbf{Ours}) & 2,574 & \bf 49.85 & \bf 42.49 & \bf 200.56 &  \bf 27.67 & \bf 0.794 & \bf  0.227 \\
\spacedcdashline{3-10}
&& UBS & 3,057 & 28.59 & 40.94 & 76.18 & 28.50 & 0.841 & \bf 0.183 \\
&& \textit{d}BS (\textbf{Ours}) & 3,057 &  \bf 24.42 & \bf 62.05 &  \bf 157.45 & \bf 28.70 & \bf 0.842 & 0.185 \\
\cmidrule{2-10}
& \multirow{4}{*}{\texttt{6DGS-PBR}}
& N-DGS & 163 & 54.80 & 328.75 & 476.79 & 40.06 & 0.976 & 0.039 \\
&& \textit{d}GS (\textbf{Ours}) & 163 & \bf 47.54 & \bf 356.15 & \bf 551.02 & \bf 40.34 & \bf 0.976 & \bf 0.038 \\
\spacedcdashline{3-10}
&& UBS & 163 & 8.70 & 239.87 & 348.63 & 39.53 & 0.975 & 0.042 \\
&& \textit{d}BS (\textbf{Ours}) & 163 & \bf 8.47 & \bf 305.33 & \bf 401.93 & \bf 39.72 & \bf 0.976 & \bf 0.042 \\
\midrule
\midrule
\multirow{8}{*}{\rotatebox{90}{DYNAMIC}}
& \multirow{4}{*}{\texttt{D-NeRF}}
& N-DGS & 150 & \bf 10.31 & 306.51 & 501.52 & 34.63 & 0.976 & 0.033 \\
&& \textit{d}GS (\textbf{Ours}) & 150 &  10.61 & \bf 407.40 & \bf 746.65 & \bf 34.64 & \bf 0.976 & \bf 0.030 \\
\spacedcdashline{3-10}
&& UBS & 150 & \bf 8.37 & 229.83 & 371.81 & 32.91 & 0.967 & 0.040 \\
&& \textit{d}BS (\textbf{Ours}) & 150 & 9.34 & \bf 323.75 & \bf 463.09 & \bf 33.55 & \bf 0.972 & \bf 0.033 \\
\cmidrule{2-10}
& \multirow{4}{*}{\texttt{7DGS-PBR}}
& N-DGS & 175 & \bf 28.44 & 294.48 & 511.12 & 29.20 & 0.944 & 0.063 \\
&& \textit{d}GS (\textbf{Ours}) & 175 & 31.12 & \bf 366.06 & \bf 577.38 & \bf 29.90 & \bf 0.948 & \bf 0.061 \\
\spacedcdashline{3-10}
&& UBS & 175 & 9.90 & 232.17 & 396.33 & 30.61 & 0.951 & 0.057 \\
&& \textit{d}BS (\textbf{Ours}) & 175 & \bf 8.79 & \bf 347.93 & \bf 539.57 & \bf 31.87 & \bf 0.955 & \bf 0.053 \\
\bottomrule
\end{tabular}}
\vspace{-2em}
\end{table*}

\section{Experiments}
\label{sec:experiments}

\subsection{Experimental Setup}

\textbf{Datasets.}
We evaluate the \dfamily{} family across five diverse datasets covering both static and dynamic scenes:
\vspace{-1em}
\begin{itemize}[leftmargin=*]
    \item \textit{Static scenes:} (1) \textit{NeRF Synthetic}~\citep{mildenhall2021nerf} with 8 synthetic scenes under controlled lighting for evaluating geometric reconstruction; (2) \textit{Mip-NeRF 360}~\citep{barron2022mipnerf360} with 9 challenging unbounded real-world scenes for evaluating view synthesis; and (3) \textit{6DGS-PBR}~\citep{gao20246dgs} featuring 7 physically-based scenes with complex view-dependent effects.

    \item \textit{Dynamic scenes:} (4) \textit{D-NeRF}~\citep{pumarola2020dnerfneuralradiancefields} with 8 synthetic sequences featuring various deformation types; and (5) \textit{7DGS-PBR}~\citep{gao20257dgs} containing 6 physically-based dynamic scenes with complex spatio-temporal-angular correlations.
\end{itemize}

\textbf{Evaluation Metrics.}
We assess reconstruction quality using Peak Signal-to-Noise Ratio (PSNR), Structural Similarity Index (SSIM)~\citep{wang2004image}, and Learned Perceptual Image Patch Similarity (LPIPS)~\citep{zhang2018perceptual}. We follow the 3DGS~\citep{3dgs} evaluation protocol (joint-RGB PSNR, white background on NeRF Synthetic). We re-evaluate UBS~\citep{liu2025universal} under this protocol; its released evaluator instead averages channel-wise PSNR and uses a black background on NeRF Synthetic. We also report training time (minutes), Gaussian count (K), and two rendering speeds: FPS uses a PyTorch reference implementation for slicing, while FPS$_{\text{CUDA}}$ uses the fused CUDA slicing kernels.

\textbf{Implementation Details.}
All methods are implemented in PyTorch with custom CUDA kernels on a single NVIDIA A100 GPU and rendered with the TC-GS rasterizer~\citep{liao2025tc}. We compare two head-to-head pairs (\dgs{} vs.\ N-DGS, \dbs{} vs.\ UBS) and additionally include \dgso{} (\dgs{} with $\boldsymbol{\Lambda}{=}\mathbf{0}$) as a maximum-efficiency variant. All methods share the same training pipeline: Adam optimizer, 30K iterations, identical learning rates and densification schedules per dataset. Main dynamic MCMC results use batch size 1 for both methods; Appendix~\ref{app:robustness} checks the Beta pair at the original batch size 4. Main benchmark entries are single-run scene averages; repeated-seed controls are reported separately. For \dgs{} and \dbs{}, the view mean $\boldsymbol{\mu}_q$ is initialized to random unit vectors (with time uniformly sampled in $[0,1]$ for dynamic scenes), the coupling-direction parameters $\boldsymbol{\Theta}_{pq}$ from $\mathcal{N}(0,0.01^2)$, and the Cholesky factor $\mathbf{L}$ with diagonal elements $\log(2.0)$ and zero off-diagonals.

\subsection{Results with MCMC Densification}

Table~\ref{tab:mcmc_comparison} compares each direct method against its covariance-derived counterpart under MCMC densification~\citep{kheradmand20243dgaussiansplattingmarkov}. Pairs share per-scene primitive caps, while realized counts can differ. We cap primitives at 300K for NeRF Synthetic and 150K for D-NeRF, with scene-specific caps for Mip-NeRF 360 (1.5M--6M), 6DGS-PBR (60K--300K), and 7DGS-PBR (150K--300K).

\textbf{Gaussian-kernel pair (\dgs{} vs.\ N-DGS).} \dgs{} matches or improves dataset-mean quality while increasing rendering throughput. On Mip-NeRF 360, it gains +0.79 dB with 2.574M versus 2.852M primitives, approximately 10\% fewer. Arithmetic-mean FPS$_{\text{CUDA}}$ is 200.56 versus 75.31 (a 2.66$\times$ ratio); this aggregation differs from mean frame time, and realized counts also affect speed. NeRF Synthetic gains +0.30 dB and +70\% mean FPS (593.94 vs.\ 348.95). D-NeRF gives quality parity (+0.01 dB) and +49\% mean FPS (746.65 vs.\ 501.52). On 7DGS-PBR, which combines dynamics with view-dependent specular effects, \dgs{} gains +0.70 dB and +13\% mean FPS (577.38 vs.\ 511.12).

\textbf{Beta-kernel pair (\dbs{} vs.\ UBS).} The same throughput pattern transfers to the Beta kernel. On NeRF Synthetic, \dbs{} gives quality parity (+0.05 dB) while rendering +36\% faster (415.66 vs.\ 306.33) and training in 5.48 vs.\ 6.80 minutes. On Mip-NeRF 360, \dbs{} improves PSNR by +0.20 dB and renders +107\% faster (157.45 vs.\ 76.18). On D-NeRF, \dbs{} gains +0.64 dB with +25\% faster rendering (463.09 vs.\ 371.81). The 7DGS-PBR gap is largest: +1.26 dB and +36\% (539.57 vs.\ 396.33), with per-scene PSNR gains reaching +2.06 dB on \texttt{heart}. These results support computational gains across kernels; quality depends on the full parameterization and its optimization priors.

\textbf{Slicing and complete-render cost.} The two FPS columns compare PyTorch and fused CUDA slicing within each method. On Mip-NeRF 360, this increases N-DGS mean FPS by 2.11$\times$ and \dgs{} mean FPS by 4.72$\times$. On four profiled checkpoints, slicing occupies 42--64\% of baseline frame time; projection, binning, sorting, and rasterization account for the remainder. Appendix~\ref{app:efficiency-controls} separates these costs and explains why shared downstream code does not imply equal rendering workload.

Figure~\ref{fig:qualitative} shows qualitative comparisons. On the three static scenes, \dgs{}, UBS, and \dbs{} all produce visually clean reconstructions, while \dgs{} shows noticeably fewer artifacts than N-DGS. On the two dynamic scenes, \dgs{} and \dbs{} both deliver better visual quality than their covariance-derived counterparts (N-DGS and UBS, respectively). Standard-densification results for \dgs{} vs.\ N-DGS, the protocol N-DGS originally evaluates with, appear in Appendix~\ref{sec:detailed_results}. A broader comparison against SOTA static-scene methods on Mip-NeRF 360 and NeRF Synthetic, where \dbs{} achieves the best PSNR on Mip-NeRF 360 and ties for best PSNR on NeRF Synthetic, is provided in Appendix~\ref{app:sota}.

\begin{table}[!t]
\centering
\caption{Ablation study on NeRF Synthetic and 6DGS-PBR. $\Lambda{=}0$ gives \textit{d}GS-O (opacity only), $\Lambda{=}1$ fixes full coupling, and learned $\Lambda$ gives full \textit{d}GS. ``w/o $\bar{s}$'' removes spatial scaling.}
\label{tab:ablation}
\setlength{\tabcolsep}{4pt}
\resizebox{0.7\textwidth}{!}{%
\begin{tabular}{l|ccc|ccc}
\toprule
 & \multicolumn{3}{c|}{\texttt{NeRF Synthetic}} & \multicolumn{3}{c}{\texttt{6DGS-PBR}} \\
Config & PSNR $\uparrow$ & SSIM $\uparrow$ & LPIPS $\downarrow$ & PSNR $\uparrow$ & SSIM $\uparrow$ & LPIPS $\downarrow$ \\
\midrule
$\Lambda{=}0$ (\textit{d}GS-O) & 33.84 & 0.971 & 0.030 & 36.54 & 0.965 & 0.059 \\
$\Lambda{=}1$ (fixed) & 33.78 & 0.971 & 0.029 & 37.31 & \textbf{0.972} & 0.053 \\
\textit{d}GS w/o $\bar{s}$ & 31.26 & 0.957 & 0.046 & 34.78 & 0.951 & 0.075 \\
$\Lambda$ learned (\textit{d}GS) & \textbf{33.91} & \textbf{0.972} & \textbf{0.029} & \textbf{37.75} & \textbf{0.972} & \textbf{0.052} \\
\bottomrule
\end{tabular}}%
\vspace{-1em}
\end{table}

\subsection{Ablation Study}

Table~\ref{tab:ablation} ablates the two key design choices: the per-dimension coupling $\boldsymbol{\Lambda}$ and the spatial scaling $\bar{s}$ in the coupling factor $\mathbf{V}_{pq}$. All ablation runs use standard 3DGS densification~\citep{3dgs} on NeRF Synthetic and 6DGS-PBR (the same protocol as the appendix Standard-Densification results).
An unrestricted direct-regression control reaches 32.91/36.44 dB versus \dgs{}'s 33.91/37.75 dB, supporting the role of structure and optimization priors beyond direct readout (Appendix~\ref{app:robustness}).

\textbf{Learned $\boldsymbol{\Lambda}$ outperforms both fixed extremes.} On NeRF Synthetic, learned $\boldsymbol{\Lambda}$ achieves 33.91 dB vs.\ 33.84 ($\boldsymbol{\Lambda}{=}\mathbf{0}$, opacity-only) and 33.78 ($\boldsymbol{\Lambda}{=}\mathbf{1}$, full coupling). The gap widens on 6DGS-PBR (view-dependent specular scenes): 37.75 dB vs.\ 36.54 ($\boldsymbol{\Lambda}{=}\mathbf{0}$, $-1.21$ dB) and 37.31 ($\boldsymbol{\Lambda}{=}\mathbf{1}$, $-0.44$ dB). These results suggest that the optimal coupling is scene-dependent and may be primitive-dependent: the diffuse NeRF Synthetic scenes prefer weak coupling ($\boldsymbol{\Lambda}{=}\mathbf{0}$ beats $\boldsymbol{\Lambda}{=}\mathbf{1}$), the specular 6DGS-PBR scenes prefer strong coupling ($\boldsymbol{\Lambda}{=}\mathbf{1}$ beats $\boldsymbol{\Lambda}{=}\mathbf{0}$ by +0.77 dB), and learned per-Gaussian $\boldsymbol{\Lambda}$ improves on both extremes on both datasets.

\textbf{Spatial scaling is critical.} Removing $\bar{s}$ degrades PSNR by 2.65 dB on NeRF Synthetic (33.91 vs.\ 31.26) and 2.97 dB on 6DGS-PBR (37.75 vs.\ 34.78). Notably, an unscaled coupling factor is worse than no position regression at all (the $\boldsymbol{\Lambda}{=}\mathbf{0}$ baseline at 33.84 / 36.54 beats \dgs{} w/o $\bar{s}$ at 31.26 / 34.78). This supports the spatial-scale prior on $\mathbf{V}_{pq}$ analyzed in Appendix~\ref{sec:spatial_scaling_proof}; the effective regression and final shift additionally depend on $\operatorname{diag}(\boldsymbol{\Lambda})\mathbf{V}_{qq}$ and the query deviation.

\textbf{Scope and limitations.}
Our evaluation covers five static and dynamic benchmarks; other kernels and
city-scale scenes require separate validation of memory use and visibility handling.

\section{Conclusion}

We presented direct conditional parameterization for N-dimensional splatting. Whereas N-DGS derives rendered quantities from a joint covariance through inversion and Schur complementation, \dgs{} directly parameterizes conditional quantities with explicit spatial-scale and coupling priors. Unrestricted conditional coordinates preserve the joint Gaussian covariance family, while the structured factorization makes rendering parameters explicit and gives optimization direct control over their coupling. Across five benchmarks and two kernels, the resulting methods match or improve aggregate quality. Production slicing is 6.9--7.7$\times$ faster than on-the-fly conditioning, with smaller complete-render gains determined by downstream workload. Future work can extend the formulation to other primitive attributes, conditioning queries, and kernel families.

\clearpage
\noindent\textbf{Funding and competing interests.}
This work was supported by United Imaging Intelligence through employment
and institutional computing resources. All authors are affiliated with the
company and report no additional third-party funding or competing financial interests.

\bibliographystyle{plainnat}
\bibliography{references}

\newpage
\appendix

\section{Comparison with State-of-the-art Methods}
\label{app:sota}

The main paper compares each direct method against its covariance-derived counterpart in matched pairs to isolate the contribution of conditioning. Here we situate \dbs{} against a broader range of static-scene methods on Mip-NeRF 360~\citep{barron2022mipnerf360} and NeRF Synthetic~\citep{mildenhall2021nerf}, including implicit methods (Instant-NGP, Mip-NeRF 360, Zip-NeRF), explicit splatting methods, and recent primitive-based radiance models. Table~\ref{tab:sota} reports results: \dbs{} achieves the best PSNR on Mip-NeRF 360 (28.70 dB), ties for best PSNR on NeRF Synthetic (34.53 dB), and remains competitive on the remaining metrics. Compared to its covariance-derived counterpart UBS~\citep{liu2025universal}, \dbs{} leads on PSNR and SSIM across both benchmarks and on LPIPS for NeRF Synthetic. These results assess competitiveness across metrics.

\textbf{Evaluation protocol.} UBS and \dbs{} in Table~\ref{tab:sota} are re-evaluated under the same 3DGS protocol described in \cref{sec:experiments}; UBS numbers therefore differ from those reported in the original UBS paper~\citep{liu2025universal}.

\begin{table*}[thp]
    \centering
    \caption{Comparison with SOTA methods on Mip-NeRF360~\citep{barron2022mipnerf360} and NeRF Synthetic~\citep{mildenhall2021nerf}. Literature baselines retain their reported protocols; UBS and \textit{d}BS use the same 3DGS evaluation protocol.}
    \resizebox{0.95\linewidth}{!}{%
    \begin{tabular}{cl|ccc|ccc}
        \toprule
        & \multirow{2}{*}{Methods}
        & \multicolumn{3}{c|}{Mip-NeRF360}
        & \multicolumn{3}{c}{NeRF Synthetic}\\
        \cmidrule(lr){3-8}
         &
         & PSNR$\uparrow$ & SSIM$\uparrow$ & LPIPS$\downarrow$
         & PSNR$\uparrow$ & SSIM$\uparrow$ & LPIPS$\downarrow$ \\
        \midrule
        \multirow{3}{*}{\rotatebox[origin=c]{90}{implicit}}
         & Instant-NGP~\citep{M_ller_2022}
           & 25.51 & 0.684 & 0.398
           &33.18 & 0.959&0.055\\
         & Mip-NeRF360~\citep{barron2022mipnerf360}
           & 27.69 & 0.792 & 0.237
           &33.25&0.962&0.039\\
         & Zip-NeRF~\citep{barron2023zipnerfantialiasedgridbasedneural}
           & 28.54 & 0.828 & 0.189
           &33.10&0.971&0.031\\
        \midrule
        \multirow{14}{*}{\rotatebox[origin=c]{90}{explicit}}
         & 3DGS~\citep{3dgs}
           & 27.20 & 0.815 & 0.214
           &33.31&0.969&0.037\\
         & GES~\citep{hamdi2024gesgeneralizedexponentialsplatting}
           &26.91&0.794&0.250
           & - & - & -  \\
         & 2DGS~\citep{2dgs}
           & 27.04 & 0.805 & 0.297
           &33.07&-&-\\
         & Mip-Splatting~\citep{yu2023mipsplattingaliasfree3dgaussian}
           & 27.79 & 0.827 & 0.203
           &33.33&0.969&0.039\\
         & 3DGS-MCMC~\citep{kheradmand20243dgaussiansplattingmarkov}
           & 28.29 & 0.840 & 0.210
           &33.80&0.970&0.040\\
         & DRKS \citep{huang2025deformable}
           &26.76&0.787&0.236
           &33.82&-&-\\
         &Textured GS \citep{chao2025textured}
           &27.35 & 0.827 & 0.186
           &33.24 & 0.967 & 0.043\\
         & Quadratic GS \citep{zhang2025quadratic}
           &27.39&0.813&0.213
           &-&-&-\\
         &DisC-GS \citep{qu2024disc}
           &28.01&0.833&0.189
           &-&-&-\\
         & Triangle Splatting~\citep{held2025triangle}
           &27.16&0.814&0.191
           &-&-&-\\
         & Radiance Meshes~\citep{radiancemesh}
           &27.16&0.819&0.255
           &-&-&-\\
         & Spherical Voronoi~\citep{di2025spherical}
           & \cellcolor{orange!25}28.57 & 0.835 & 0.230
           & \cellcolor{red!25}34.53 & \cellcolor{orange!25}0.973 & 0.032\\
         & UBS\citep{liu2025universal}
           & 28.50 & \cellcolor{orange!25}0.841 & \cellcolor{red!25}0.183
           & \cellcolor{orange!25}34.48 & \cellcolor{red!25}0.975 & \cellcolor{orange!25}0.026\\
         & \textit{d}BS (\textbf{Ours})
           & \cellcolor{red!25}28.70 & \cellcolor{red!25}0.842 & \cellcolor{orange!25}0.185
           & \cellcolor{red!25}34.53 & \cellcolor{red!25}0.975 & \cellcolor{red!25}0.025\\
        \bottomrule
    \end{tabular}}
    \label{tab:sota}
\end{table*}

\section{Conditional Slicing Algorithms}
\label{sec:algorithms}

The fused CUDA kernels for \dgs{} and \dbs{} are summarized below. Both share the same precision-Cholesky distance computation $\mathbf{L}^\top\boldsymbol{\delta}$ and the same factorized regression structure for position; they differ only in the opacity decay function applied to that distance.

\begin{algorithm}[h]
\caption{\dgs{} Conditional Slicing}
\label{alg:slice}
\begin{algorithmic}[1]
\REQUIRE Primitive parameters: $\mu_p$, $\mu_q$, $\alpha$, $\mathbf{S}$, $\boldsymbol{\Theta}_{pq}$, $\mathbf{L}$, $\boldsymbol{\Lambda}$
\REQUIRE Query point: $q$ (view/time)
\STATE $\boldsymbol{\delta} \gets q - \mu_q$
\STATE $\alpha_{\text{cond}} \gets \alpha \cdot \exp(-\lambda_o\,\|\mathbf{L}^\top \boldsymbol{\delta}\|^2)$ \Comment{Gaussian opacity decay}
\IF{\dgso{} variant}
    \STATE \return  $\alpha_{\text{cond}}, \mu_p$ \Comment{Early exit for opacity only}
\ENDIF
\STATE $\mathbf{V}_{pq} \gets \text{mean}(\text{diag}(\mathbf{S})) \cdot \boldsymbol{\Theta}_{pq}$
\STATE $\tilde{\mathbf{V}}_{qq} \gets \mathrm{diag}(\boldsymbol{\Lambda})\,\mathbf{L}\mathbf{L}^\top$
\STATE $\mu_{\text{cond}} \gets \mu_p + \mathbf{V}_{pq} \cdot \tilde{\mathbf{V}}_{qq} \cdot \boldsymbol{\delta}$
\STATE \return $\alpha_{\text{cond}}$, $\mu_{\text{cond}}$
\end{algorithmic}
\end{algorithm}

\begin{algorithm}[h]
\caption{\dbs{} Conditional Slicing}
\label{alg:slice_dbs}
\begin{algorithmic}[1]
\REQUIRE Primitive parameters: $\mu_p$, $\mu_q$, $\alpha$, $\mathbf{S}$, $\boldsymbol{\Theta}_{pq}$, $\mathbf{L}$, activated bandwidth $\boldsymbol{\beta}$
\REQUIRE Query point: $q$ (view/time)
\STATE $\boldsymbol{\delta} \gets q - \mu_q$
\STATE $\mathbf{z} \gets \mathbf{L}^\top \boldsymbol{\delta}$
\STATE $\alpha_{\text{cond}} \gets \alpha \cdot \prod_{i=1}^{C} (1 - \tanh(z_i^2))^{\beta_i}$ \Comment{Beta opacity decay}
\STATE $\boldsymbol{\Lambda}_\beta \gets \min(\boldsymbol{\beta}/4,\, \mathbf{1})$ \Comment{Beta-derived coupling}
\STATE $\mathbf{V}_{pq} \gets \text{mean}(\text{diag}(\mathbf{S})) \cdot \boldsymbol{\Theta}_{pq}$
\STATE $\tilde{\mathbf{V}}_{qq} \gets \mathrm{diag}(\boldsymbol{\Lambda}_\beta)\,\mathbf{L}\mathbf{L}^\top$
\STATE $\mu_{\text{cond}} \gets \mu_p + \mathbf{V}_{pq} \cdot \tilde{\mathbf{V}}_{qq} \cdot \boldsymbol{\delta}$
\STATE \return $\alpha_{\text{cond}}$, $\mu_{\text{cond}}$
\end{algorithmic}
\end{algorithm}

\section{Direct Conditional Coordinates for Joint Gaussian Primitives}
\label{sec:conditional_coordinates}

The operational quantities underlying the direct parameters define coordinates
on the space of joint Gaussian covariances. This establishes that the query
precision, query-to-position regression operator, and rendered spatial
covariance used by \dgs{} are compatible with a valid joint Gaussian, even
though \dgs{} never constructs that covariance. The unrestricted coordinate
map is a global diffeomorphism; \dgs{} directly optimizes a structured
factorization of these rendering-aligned coordinates.

Let the spatial dimension be $p=3$, let the query dimension be $C$, and let
\begin{equation}
\boldsymbol{\Sigma}_{\mathrm{cond}}\in\mathbb{S}_{++}^{p},\qquad
\mathbf{V}_{qq}\in\mathbb{S}_{++}^{C},\qquad
\mathbf{M}\in\mathbb{R}^{p\times C}.
\label{eq:conditional_coordinates_domain}
\end{equation}
Here $\boldsymbol{\Sigma}_{\mathrm{cond}}$ denotes the spatial covariance stored
and rendered by \dgs{}. In the reconstructed joint-Gaussian interpretation
below, it is the conditional spatial covariance and should not be confused with
the spatial marginal block $\boldsymbol{\Sigma}_{pp}^{\mathrm{joint}}$.
$\mathbf{V}_{qq}$ is the query-space precision, and $\mathbf{M}$ is the effective
query-to-position regression operator. In \dgs{}, these operational quantities are
\begin{equation}
\mathbf{V}_{qq}=\mathbf{L}\mathbf{L}^{\top},\qquad
\mathbf{M}=\mathbf{V}_{pq}\operatorname{diag}(\boldsymbol{\Lambda})
\mathbf{V}_{qq},
\label{eq:dgs_conditional_coordinates}
\end{equation}
while $\boldsymbol{\Sigma}_{\mathrm{cond}}$ remains query-independent.

\begin{proposition}[Conditional-coordinate parameterization]
\label{prop:conditional_coordinates}
Define
\begin{equation}
\boldsymbol{\Psi}(\boldsymbol{\Sigma}_{\mathrm{cond}},\mathbf{M},\mathbf{V}_{qq})
=
\begin{pmatrix}
\boldsymbol{\Sigma}_{\mathrm{cond}}+\mathbf{M}\mathbf{V}_{qq}^{-1}\mathbf{M}^{\top}
& \mathbf{M}\mathbf{V}_{qq}^{-1}\\
\mathbf{V}_{qq}^{-1}\mathbf{M}^{\top} & \mathbf{V}_{qq}^{-1}
\end{pmatrix}.
\label{eq:conditional_coordinate_map}
\end{equation}
Then:
\begin{enumerate}[label=(\roman*),leftmargin=*]
\item $\boldsymbol{\Psi}(\boldsymbol{\Sigma}_{\mathrm{cond}},\mathbf{M},\mathbf{V}_{qq})$
is symmetric positive-definite;
\item its query marginal has precision $\mathbf{V}_{qq}$;
\item conditioning at query $\mathbf{q}$ gives
\begin{equation}
\boldsymbol{\mu}_{p\mid q}
=\boldsymbol{\mu}_{p}+\mathbf{M}(\mathbf{q}-\boldsymbol{\mu}_{q}),
\qquad
\boldsymbol{\Sigma}_{p\mid q}=\boldsymbol{\Sigma}_{\mathrm{cond}};
\label{eq:conditional_coordinate_slice}
\end{equation}
\item the map
\begin{equation}
\boldsymbol{\Psi}:
\mathbb{S}_{++}^{p}\times\mathbb{R}^{p\times C}\times
\mathbb{S}_{++}^{C}\longrightarrow\mathbb{S}_{++}^{p+C}
\label{eq:conditional_coordinate_bijection}
\end{equation}
is a smooth bijection.
\end{enumerate}
\end{proposition}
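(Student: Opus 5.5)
The plan is to write $\boldsymbol{\Psi}$ as a block-triangular congruence and read every claim off that factorization. Set $\mathbf{S}_q=\mathbf{V}_{qq}^{-1}\in\mathbb{S}_{++}^{C}$ and
\begin{equation*}
\mathbf{T}=\begin{pmatrix}\mathbf{I}_p & \mathbf{M}\\ \mathbf{0} & \mathbf{I}_C\end{pmatrix}.
\end{equation*}
Multiplying out shows
\begin{equation*}
\boldsymbol{\Psi}=\mathbf{T}\,\operatorname{blockdiag}(\boldsymbol{\Sigma}_{\mathrm{cond}},\mathbf{S}_q)\,\mathbf{T}^{\top}.
\end{equation*}
The $(1,1)$ block is $\boldsymbol{\Sigma}_{\mathrm{cond}}+\mathbf{M}\mathbf{S}_q\mathbf{M}^\top$, the off-diagonal block is $\mathbf{M}\mathbf{S}_q$, and the $(2,2)$ block is $\mathbf{S}_q$. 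This matches \eqref{eq:conditional_coordinate_map}.

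For (i), $\mathbf{T}$ is invertible with determinant $1$ and the middle factor is SPD, so $\boldsymbol{\Psi}$ is symmetric positive-definite by congruence. Sylvester's law of inertia, or directly $\mathbf{x}^\top\boldsymbol{\Psi}\mathbf{x}=\mathbf{y}^\top\operatorname{blockdiag}(\cdot)\mathbf{y}>0$ with $\mathbf{y}=\mathbf{T}^\top\mathbf{x}\neq 0$, gives this.

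For (ii), the query marginal covariance of a joint Gaussian is the $(2,2)$ block, which equals $\mathbf{V}_{qq}^{-1}$. Its precision is therefore $\mathbf{V}_{qq}$.

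For (iii), substitute the blocks into the standard conditional formulas \eqref{eq:prelim_mu}--\eqref{eq:prelim_sigma}. The regression matrix is $\boldsymbol{\Sigma}_{pq}\boldsymbol{\Sigma}_{qq}^{-1}=\mathbf{M}\mathbf{S}_q\mathbf{S}_q^{-1}=\mathbf{M}$, which gives the mean shift $\mathbf{M}(\mathbf{q}-\boldsymbol{\mu}_q)$. The Schur complement is
\begin{equation*}
\boldsymbol{\Sigma}_{\mathrm{cond}}+\mathbf{M}\mathbf{S}_q\mathbf{M}^\top-\mathbf{M}\mathbf{S}_q\mathbf{S}_q^{-1}\mathbf{S}_q\mathbf{M}^\top=\boldsymbol{\Sigma}_{\mathrm{cond}}.
\end{equation*}

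For (iv), smoothness holds because matrix inversion is smooth (indeed rational) on $\mathbb{S}_{++}^{C}$ and the remaining operations are polynomial. For bijectivity I construct the inverse explicitly. Given any $\boldsymbol{\Sigma}\in\mathbb{S}_{++}^{p+C}$ with blocks $\boldsymbol{\Sigma}_{pp},\boldsymbol{\Sigma}_{pq},\boldsymbol{\Sigma}_{qq}$, set
\begin{equation*}
\mathbf{V}_{qq}=\boldsymbol{\Sigma}_{qq}^{-1},\qquad
\mathbf{M}=\boldsymbol{\Sigma}_{pq}\boldsymbol{\Sigma}_{qq}^{-1},\qquad
\boldsymbol{\Sigma}_{\mathrm{cond}}=\boldsymbol{\Sigma}_{pp}-\boldsymbol{\Sigma}_{pq}\boldsymbol{\Sigma}_{qq}^{-1}\boldsymbol{\Sigma}_{pq}^\top.
\end{equation*}
The main point to verify is that this map lands in the stated domain. $\boldsymbol{\Sigma}_{qq}$ is a principal submatrix of an SPD matrix, so it is SPD and $\mathbf{V}_{qq}\in\mathbb{S}_{++}^{C}$. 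The Schur complement of an SPD matrix is SPD, which I show by the same factorization read backwards: with $\mathbf{M}$ as above, $\mathbf{T}^{-1}\boldsymbol{\Sigma}\mathbf{T}^{-\top}=\operatorname{blockdiag}(\boldsymbol{\Sigma}/\boldsymbol{\Sigma}_{qq},\boldsymbol{\Sigma}_{qq})$, and this is SPD by congruence.

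It then remains to check both compositions. $\boldsymbol{\Psi}$ applied to the recovered triple reproduces the blocks of $\boldsymbol{\Sigma}$, which is immediate from the factorization. The inverse map applied to $\boldsymbol{\Psi}(\boldsymbol{\Sigma}_{\mathrm{cond}},\mathbf{M},\mathbf{V}_{qq})$ returns the triple, as already computed in (ii)--(iii). The inverse is also smooth, so $\boldsymbol{\Psi}$ is in fact a diffeomorphism.

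I expect no real obstacle. The only points needing care are that the codomain and domain are matched exactly, which uses SPD-ness of the Schur complement, and that symmetry of $\boldsymbol{\Psi}$ follows from the symmetry of $\mathbf{S}_q$.
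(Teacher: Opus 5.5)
Your proposal is correct and follows the paper's proof essentially step for step. You use the same unit block-triangular congruence $\boldsymbol{\Psi}=\mathbf{T}\operatorname{blockdiag}(\boldsymbol{\Sigma}_{\mathrm{cond}},\mathbf{V}_{qq}^{-1})\mathbf{T}^{\top}$ for (i)--(iii) and the same explicit inverse $\mathbf{V}_{qq}=\boldsymbol{\Sigma}_{qq}^{-1}$, $\mathbf{M}=\boldsymbol{\Sigma}_{pq}\boldsymbol{\Sigma}_{qq}^{-1}$, $\boldsymbol{\Sigma}_{\mathrm{cond}}=\boldsymbol{\Sigma}_{pp}-\boldsymbol{\Sigma}_{pq}\boldsymbol{\Sigma}_{qq}^{-1}\boldsymbol{\Sigma}_{pq}^{\top}$ for (iv), differing only in small, more self-contained touches: you derive positivity of the Schur complement from the reversed congruence $\mathbf{T}^{-1}\boldsymbol{\Sigma}\mathbf{T}^{-\top}$ rather than citing the Schur complement theorem, and you check both compositions explicitly where the paper only says the coordinates are ``uniquely recovered.''
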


\begin{proof}
The covariance in \cref{eq:conditional_coordinate_map} has the block-triangular
factorization
\begin{equation}
\boldsymbol{\Psi}
=
\begin{pmatrix}
\mathbf{I}_{p} & \mathbf{M}\\
\mathbf{0} & \mathbf{I}_{C}
\end{pmatrix}
\begin{pmatrix}
\boldsymbol{\Sigma}_{\mathrm{cond}} & \mathbf{0}\\
\mathbf{0} & \mathbf{V}_{qq}^{-1}
\end{pmatrix}
\begin{pmatrix}
\mathbf{I}_{p} & \mathbf{0}\\
\mathbf{M}^{\top} & \mathbf{I}_{C}
\end{pmatrix}.
\label{eq:conditional_coordinate_factorization}
\end{equation}
The outer factors are invertible and the middle factor is positive-definite, so
$\boldsymbol{\Psi}\succ\mathbf{0}$. Its blocks satisfy
\begin{equation}
\boldsymbol{\Sigma}_{qq}=\mathbf{V}_{qq}^{-1},\qquad
\boldsymbol{\Sigma}_{pq}=\mathbf{M}\mathbf{V}_{qq}^{-1},\qquad
\boldsymbol{\Sigma}_{pp}=\boldsymbol{\Sigma}_{\mathrm{cond}}
+\mathbf{M}\mathbf{V}_{qq}^{-1}\mathbf{M}^{\top}.
\label{eq:conditional_coordinate_blocks}
\end{equation}
Consequently,
\begin{equation}
\boldsymbol{\Sigma}_{pq}\boldsymbol{\Sigma}_{qq}^{-1}
=\mathbf{M}\mathbf{V}_{qq}^{-1}\mathbf{V}_{qq}=\mathbf{M},
\end{equation}
which gives the conditional mean in
\cref{eq:conditional_coordinate_slice}. The conditional covariance is
\begin{align}
\boldsymbol{\Sigma}_{p\mid q}
&=\boldsymbol{\Sigma}_{pp}
-\boldsymbol{\Sigma}_{pq}\boldsymbol{\Sigma}_{qq}^{-1}
 \boldsymbol{\Sigma}_{pq}^{\top}\\
&=\boldsymbol{\Sigma}_{\mathrm{cond}}+\mathbf{M}\mathbf{V}_{qq}^{-1}\mathbf{M}^{\top}
-\mathbf{M}\mathbf{V}_{qq}^{-1}\mathbf{V}_{qq}\mathbf{V}_{qq}^{-1}\mathbf{M}^{\top}
=\boldsymbol{\Sigma}_{\mathrm{cond}}.
\end{align}

Conversely, let
\begin{equation}
\boldsymbol{\Sigma}=
\begin{pmatrix}
\boldsymbol{\Sigma}_{pp} & \boldsymbol{\Sigma}_{pq}\\
\boldsymbol{\Sigma}_{pq}^{\top} & \boldsymbol{\Sigma}_{qq}
\end{pmatrix}\succ\mathbf{0}.
\end{equation}
Its conditional coordinates are uniquely recovered by
\begin{equation}
\mathbf{V}_{qq}=\boldsymbol{\Sigma}_{qq}^{-1},\qquad
\mathbf{M}=\boldsymbol{\Sigma}_{pq}\boldsymbol{\Sigma}_{qq}^{-1},\qquad
\boldsymbol{\Sigma}_{\mathrm{cond}}=\boldsymbol{\Sigma}_{pp}
-\boldsymbol{\Sigma}_{pq}\boldsymbol{\Sigma}_{qq}^{-1}
 \boldsymbol{\Sigma}_{pq}^{\top}.
\label{eq:conditional_coordinate_inverse}
\end{equation}
The Schur complement theorem gives $\boldsymbol{\Sigma}_{\mathrm{cond}}\succ\mathbf{0}$.
Both the forward map and this inverse are smooth on their positive-definite
domains, proving the claim.
\end{proof}

\paragraph{Parameter count.}
The dimensions match exactly:
\begin{equation}
\frac{p(p+1)}{2}+pC+\frac{C(C+1)}{2}
=\frac{(p+C)(p+C+1)}{2}.
\label{eq:conditional_coordinate_parameter_count}
\end{equation}
Thus unrestricted $(\boldsymbol{\Sigma}_{\mathrm{cond}},\mathbf{M},\mathbf{V}_{qq})$ are a
complete, nonredundant coordinate system for joint Gaussian covariances. The
actual \dgs{} factorization in \cref{eq:dgs_conditional_coordinates} additionally
constrains $\mathbf{V}_{pq}$ by spatial scaling and normalization and constrains
$\boldsymbol{\Lambda}\in[0,1]^C$. It therefore selects a structured subfamily of
this chart; we do not claim that the internal
$(\mathbf{V}_{pq},\boldsymbol{\Lambda})$ factorization is itself bijective.

\begin{corollary}[Exact implicit joint Gaussian for \dgs{}]
\label{cor:dgs_joint_gaussian}
Define the opacity precision and effective regression operator as
\begin{equation}
\boldsymbol{\Pi}=2\lambda_o\mathbf{V}_{qq},\qquad
\mathbf{M}=\mathbf{V}_{pq}\operatorname{diag}(\boldsymbol{\Lambda})
\mathbf{V}_{qq}.
\label{eq:dgs_exact_joint_coordinates}
\end{equation}
Then the implemented \dgs{} opacity exponent and position update are,
respectively, the query-marginal quadratic form and conditional-mean regression
of the valid joint covariance
\begin{equation}
\boxed{
\boldsymbol{\Sigma}_{\mathrm{dGS}}^{\star}
=\boldsymbol{\Psi}(\boldsymbol{\Sigma}_{\mathrm{cond}},\mathbf{M},\boldsymbol{\Pi})
=
\begin{pmatrix}
\boldsymbol{\Sigma}_{\mathrm{cond}}
+\mathbf{M}\boldsymbol{\Pi}^{-1}\mathbf{M}^{\top}
& \mathbf{M}\boldsymbol{\Pi}^{-1}\\
\boldsymbol{\Pi}^{-1}\mathbf{M}^{\top} & \boldsymbol{\Pi}^{-1}
\end{pmatrix}.}
\label{eq:dgs_implicit_joint_covariance}
\end{equation}
Equivalently, its blocks are
\begin{align}
\boldsymbol{\Sigma}_{qq}^{\star}
&=\frac{1}{2\lambda_o}\mathbf{V}_{qq}^{-1},
&
\boldsymbol{\Sigma}_{pq}^{\star}
&=\frac{1}{2\lambda_o}\mathbf{V}_{pq}
  \operatorname{diag}(\boldsymbol{\Lambda}),
\label{eq:dgs_exact_joint_blocks_query}\\
\boldsymbol{\Sigma}_{pp}^{\star}
&=\boldsymbol{\Sigma}_{\mathrm{cond}}
+\frac{1}{2\lambda_o}\mathbf{V}_{pq}
  \operatorname{diag}(\boldsymbol{\Lambda})\mathbf{V}_{qq}
  \operatorname{diag}(\boldsymbol{\Lambda})\mathbf{V}_{pq}^{\top}.
\label{eq:dgs_exact_joint_blocks_spatial}
\end{align}
The resulting conditional mean and covariance are exactly
\begin{equation}
\boldsymbol{\mu}_{\mathrm{cond}}
=\boldsymbol{\mu}_{p}
+\mathbf{V}_{pq}\operatorname{diag}(\boldsymbol{\Lambda})\mathbf{V}_{qq}
 (\mathbf{q}-\boldsymbol{\mu}_{q}),
\qquad
\boldsymbol{\Sigma}_{p\mid q}=\boldsymbol{\Sigma}_{\mathrm{cond}}.
\label{eq:dgs_joint_conditionals}
\end{equation}
\end{corollary}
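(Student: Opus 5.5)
The corollary follows by instantiating Proposition~\ref{prop:conditional_coordinates} with the coordinates $(\boldsymbol{\Sigma}_{\mathrm{cond}},\mathbf{M},\boldsymbol{\Pi})$ in place of $(\boldsymbol{\Sigma}_{\mathrm{cond}},\mathbf{M},\mathbf{V}_{qq})$. The first step is to check that these lie in the proposition's domain. $\boldsymbol{\Sigma}_{\mathrm{cond}}=\mathbf{R}\mathbf{S}\mathbf{S}^\top\mathbf{R}^\top$ is SPD because $\mathbf{R}$ is a rotation and $\mathbf{S}$ has positive diagonal. Since $\mathbf{L}$ is lower triangular with exponentiated, hence positive, diagonal, it is invertible. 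Therefore $\mathbf{V}_{qq}=\mathbf{L}\mathbf{L}^\top\succ\mathbf{0}$, and $\boldsymbol{\Pi}=2\lambda_o\mathbf{V}_{qq}\succ\mathbf{0}$ because $\lambda_o>0$. Finally, $\mathbf{M}=\mathbf{V}_{pq}\operatorname{diag}(\boldsymbol{\Lambda})\mathbf{V}_{qq}$ is just some matrix in $\mathbb{R}^{3\times C}$, which the proposition allows without restriction. Part (i) of the proposition then gives $\boldsymbol{\Sigma}^\star_{\mathrm{dGS}}\succ\mathbf{0}$, so it is a valid joint covariance.

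Next I would compute the blocks from the proposition's block formulas with $\mathbf{V}_{qq}$ replaced by $\boldsymbol{\Pi}$. Using $\boldsymbol{\Pi}^{-1}=\tfrac{1}{2\lambda_o}\mathbf{V}_{qq}^{-1}$:
\[
\boldsymbol{\Sigma}^\star_{qq}=\boldsymbol{\Pi}^{-1}=\tfrac{1}{2\lambda_o}\mathbf{V}_{qq}^{-1},\qquad \boldsymbol{\Sigma}^\star_{pq}=\mathbf{M}\boldsymbol{\Pi}^{-1}=\tfrac{1}{2\lambda_o}\mathbf{V}_{pq}\operatorname{diag}(\boldsymbol{\Lambda}).
\]
In the second identity the trailing $\mathbf{V}_{qq}$ in $\mathbf{M}$ cancels against $\mathbf{V}_{qq}^{-1}$. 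For the spatial block,
\[
\mathbf{M}\boldsymbol{\Pi}^{-1}\mathbf{M}^\top=\tfrac{1}{2\lambda_o}\mathbf{V}_{pq}\operatorname{diag}(\boldsymbol{\Lambda})\mathbf{V}_{qq}\operatorname{diag}(\boldsymbol{\Lambda})\mathbf{V}_{pq}^\top,
\]
using the symmetry of $\mathbf{V}_{qq}$ and of the diagonal matrix.

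For the opacity claim I would use part (ii): the query marginal has precision $\boldsymbol{\Pi}$. Its Gaussian exponent is therefore
\[
-\tfrac12\boldsymbol{\delta}^\top\boldsymbol{\Pi}\boldsymbol{\delta}=-\lambda_o\boldsymbol{\delta}^\top\mathbf{L}\mathbf{L}^\top\boldsymbol{\delta}=-\lambda_o\|\mathbf{L}^\top\boldsymbol{\delta}\|_2^2,
\]
which is exactly the exponent in \cref{eq:direct_alpha}. For the position claim, part (iii) gives $\boldsymbol{\mu}_{p\mid q}=\boldsymbol{\mu}_p+\mathbf{M}(\mathbf{q}-\boldsymbol{\mu}_q)$ and $\boldsymbol{\Sigma}_{p\mid q}=\boldsymbol{\Sigma}_{\mathrm{cond}}$. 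Substituting $\mathbf{M}$ yields \cref{eq:direct_mu} and \cref{eq:dgs_joint_conditionals}. Part (iii) holds for any SPD third argument, so the rescaling of $\mathbf{V}_{qq}$ to $\boldsymbol{\Pi}$ does not change the regression.

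The only real subtlety is that factor $2\lambda_o$. Passing $\mathbf{V}_{qq}$ itself would give a joint covariance with the right regression but a mismatched opacity exponent $-\tfrac12\boldsymbol{\delta}^\top\mathbf{V}_{qq}\boldsymbol{\delta}$. Using $\boldsymbol{\Pi}$ fixes the exponent, and part (iii) guarantees $\mathbf{M}$ is unaffected because it enters as an independent coordinate. The rest is routine algebra.
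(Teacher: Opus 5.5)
Your proposal is correct and follows the paper's proof. You instantiate Proposition~\ref{prop:conditional_coordinates} with $(\boldsymbol{\Sigma}_{\mathrm{cond}},\mathbf{M},\boldsymbol{\Pi})$, read the opacity exponent off the query-marginal precision $\boldsymbol{\Pi}=2\lambda_o\mathbf{V}_{qq}$, and get the regression, the conditional covariance and the blocks by substitution. You also explicitly check that $\boldsymbol{\Sigma}_{\mathrm{cond}}$ and $\boldsymbol{\Pi}$ lie in the SPD domain, which the paper leaves implicit.
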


\begin{proof}
Apply \cref{prop:conditional_coordinates} with query precision
$\boldsymbol{\Pi}$ and regression operator $\mathbf{M}$. The query marginal
has precision
$(\boldsymbol{\Sigma}_{qq}^{\star})^{-1}=\boldsymbol{\Pi}
=2\lambda_o\mathbf{V}_{qq}$, so its conventional Gaussian exponent is
\begin{equation}
-\frac{1}{2}\boldsymbol{\delta}^{\top}
(\boldsymbol{\Sigma}_{qq}^{\star})^{-1}\boldsymbol{\delta}
=-\lambda_o\boldsymbol{\delta}^{\top}\mathbf{V}_{qq}
 \boldsymbol{\delta},
\end{equation}
which is exactly the implemented opacity exponent. Moreover,
\begin{equation}
\boldsymbol{\Sigma}_{pq}^{\star}
(\boldsymbol{\Sigma}_{qq}^{\star})^{-1}
=\frac{1}{2\lambda_o}\mathbf{V}_{pq}
 \operatorname{diag}(\boldsymbol{\Lambda})
 (2\lambda_o\mathbf{V}_{qq})
=\mathbf{M},
\end{equation}
which is the implemented conditional-mean regression. The conditional covariance follows
from \cref{prop:conditional_coordinates}, and direct substitution gives the
blocks in \cref{eq:dgs_exact_joint_blocks_query,eq:dgs_exact_joint_blocks_spatial}.
\end{proof}

As in standard Gaussian splatting, opacity uses an unnormalized Gaussian kernel
rather than a normalized probability density. Consequently, the probabilistic
interpretation concerns the quadratic form, conditional mean, and covariance
structure; it does not require including the Gaussian density normalization
factor in the renderer.

\begin{corollary}[Decoupling and full coupling]
\label{cor:dgs_coupling_limits}
When $\boldsymbol{\Lambda}=\mathbf{0}$,
\cref{eq:dgs_implicit_joint_covariance} is block diagonal and position is
query-independent although opacity remains query-dependent. When
$\boldsymbol{\Lambda}=\mathbf{1}$, the full regression coupling is active,
$\mathbf{M}=\mathbf{V}_{pq}\mathbf{V}_{qq}$. Intermediate values continuously
weight the precision-transformed query channels.
\end{corollary}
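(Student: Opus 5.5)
The plan is to read everything off Corollary~\ref{cor:dgs_joint_gaussian}, which already writes the implicit joint covariance $\boldsymbol{\Sigma}_{\mathrm{dGS}}^{\star}$ blockwise in terms of $\mathbf{V}_{pq}$, $\boldsymbol{\Lambda}$, $\mathbf{V}_{qq}$ and $\boldsymbol{\Sigma}_{\mathrm{cond}}$. I will substitute the two extreme values of $\boldsymbol{\Lambda}$ into \cref{eq:dgs_exact_joint_blocks_query,eq:dgs_exact_joint_blocks_spatial} and into the conditional mean \cref{eq:dgs_joint_conditionals}, and then treat the intermediate regime as a continuity statement about the map $\boldsymbol{\Lambda}\mapsto\mathbf{M}$.

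\emph{Case $\boldsymbol{\Lambda}=\mathbf{0}$.} Here $\operatorname{diag}(\boldsymbol{\Lambda})=\mathbf{0}$, so $\mathbf{M}=\mathbf{0}$. By \cref{eq:dgs_exact_joint_blocks_query}, $\boldsymbol{\Sigma}_{pq}^{\star}=\mathbf{0}$, and by \cref{eq:dgs_exact_joint_blocks_spatial}, $\boldsymbol{\Sigma}_{pp}^{\star}=\boldsymbol{\Sigma}_{\mathrm{cond}}$. Hence $\boldsymbol{\Sigma}_{\mathrm{dGS}}^{\star}=\operatorname{diag}(\boldsymbol{\Sigma}_{\mathrm{cond}},(2\lambda_o)^{-1}\mathbf{V}_{qq}^{-1})$, which is block diagonal. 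Equivalently, the block-triangular factorization \cref{eq:conditional_coordinate_factorization} has identity outer factors. The conditional mean reduces to $\boldsymbol{\mu}_{\mathrm{cond}}=\boldsymbol{\mu}_p$, independent of $\mathbf{q}$. Opacity, however, involves only $\mathbf{L}$ through \cref{eq:direct_alpha}, and $\boldsymbol{\Lambda}$ does not appear there. Since $\mathbf{V}_{qq}\succ\mathbf{0}$, the exponent $\lambda_o\|\mathbf{L}^\top\boldsymbol{\delta}\|^2$ is strictly positive for $\boldsymbol{\delta}\neq\mathbf{0}$, so opacity still varies with the query.

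\emph{Case $\boldsymbol{\Lambda}=\mathbf{1}$.} Here $\operatorname{diag}(\boldsymbol{\Lambda})=\mathbf{I}_C$, so $\mathbf{M}=\mathbf{V}_{pq}\mathbf{V}_{qq}$ by substitution. I will also record the resulting blocks $\boldsymbol{\Sigma}_{pq}^{\star}=(2\lambda_o)^{-1}\mathbf{V}_{pq}$ and $\boldsymbol{\Sigma}_{pp}^{\star}=\boldsymbol{\Sigma}_{\mathrm{cond}}+(2\lambda_o)^{-1}\mathbf{V}_{pq}\mathbf{V}_{qq}\mathbf{V}_{pq}^\top$. These show that position regresses through the same precision that governs opacity, which is the sense of ``full coupling''.

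\emph{Intermediate values.} The map $\boldsymbol{\Lambda}\mapsto\mathbf{M}=\mathbf{V}_{pq}\operatorname{diag}(\boldsymbol{\Lambda})\mathbf{V}_{qq}$ is linear, hence continuous. Writing $\mathbf{M}=\sum_i \Lambda_i\,(\mathbf{V}_{pq})_{:,i}(\mathbf{V}_{qq})_{i,:}$ shows that $\Lambda_i$ scales exactly the contribution of the $i$-th precision-transformed channel $(\mathbf{V}_{qq}\boldsymbol{\delta})_i$ to the position shift. The joint covariance stays valid throughout, because Proposition~\ref{prop:conditional_coordinates}(i) holds for every $\mathbf{M}$. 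The only obstacle is making ``continuously weight'' precise, and I would settle it with this rank-one channel decomposition. Everything else is direct substitution.
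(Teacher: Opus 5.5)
Your proposal is correct and is essentially the paper's argument. The paper gives no separate proof: it treats the corollary as an immediate substitution of $\boldsymbol{\Lambda}=\mathbf{0}$ and $\boldsymbol{\Lambda}=\mathbf{1}$ into the block formulas of Corollary~\ref{cor:dgs_joint_gaussian}, exactly as you do. Your rank-one decomposition $\mathbf{M}=\sum_i \Lambda_i(\mathbf{V}_{pq})_{:,i}(\mathbf{V}_{qq})_{i,:}$ is a clean way to make the ``intermediate values'' clause precise; note that the opacity-dependence claim also tacitly needs $\alpha>0$, which is harmless.
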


\subsection{Determinant Structure}
\label{subsec:conditional_coordinate_determinant}

The outer factors in \cref{eq:conditional_coordinate_factorization} are unit
block-triangular. Therefore
\begin{equation}
\boxed{
\det\boldsymbol{\Psi}
=\det(\boldsymbol{\Sigma}_{\mathrm{cond}})\det(\mathbf{V}_{qq}^{-1})
=\frac{\det\boldsymbol{\Sigma}_{\mathrm{cond}}}{\det\mathbf{V}_{qq}}.}
\label{eq:conditional_coordinate_determinant}
\end{equation}
The regression operator $\mathbf{M}$ does not appear. Geometrically, $\mathbf{M}$ acts
as a block shear: it changes cross-covariance and the conditional mean
without changing total covariance volume. The two determinant factors separately
measure spatial conditional volume and query-space volume.

\subsection{Volume-Element Jacobian of the Coordinate Transformation}
\label{subsec:conditional_coordinate_jacobian}

Use the standard Lebesgue coordinates on symmetric and rectangular matrices.
The conditional-coordinate map has a closed-form volume-element Jacobian.

\begin{proposition}[Coordinate-volume factor]
\label{prop:conditional_coordinate_jacobian}
For the map in \cref{eq:conditional_coordinate_map},
\begin{equation}
\boxed{
\left|\det D\boldsymbol{\Psi}\right|
=(\det\mathbf{V}_{qq})^{-(p+C+1)}.}
\label{eq:conditional_coordinate_jacobian}
\end{equation}
For spatial dimension $p=3$, hence
\begin{equation}
\left|\det D\boldsymbol{\Psi}\right|
=(\det\mathbf{V}_{qq})^{-(C+4)}.
\label{eq:dgs_conditional_coordinate_jacobian}
\end{equation}
\end{proposition}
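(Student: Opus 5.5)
The plan is to write $\boldsymbol{\Psi}$ as a composition of two maps whose Jacobians are classical, and to multiply their determinants by the chain rule. Set $\mathbf{W}=\mathbf{V}_{qq}^{-1}$ and factor
\begin{equation*}
(\boldsymbol{\Sigma}_{\mathrm{cond}},\mathbf{M},\mathbf{V}_{qq})
\xmapsto{\ \iota\ }
(\boldsymbol{\Sigma}_{\mathrm{cond}},\mathbf{M},\mathbf{W})
\xmapsto{\ \Phi\ }
\bigl(\boldsymbol{\Sigma}_{\mathrm{cond}}+\mathbf{M}\mathbf{W}\mathbf{M}^{\top},\ \mathbf{M}\mathbf{W},\ \mathbf{W}\bigr)
=(\boldsymbol{\Sigma}_{pp},\boldsymbol{\Sigma}_{pq},\boldsymbol{\Sigma}_{qq}).
\end{equation*}
This matches the block formulas in \cref{eq:conditional_coordinate_blocks}. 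Smoothness and bijectivity are already given by \cref{prop:conditional_coordinates}, so only the determinant $|\det D\boldsymbol{\Psi}|=|\det D\Phi|\cdot|\det D\iota|$ needs computing.

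\emph{First factor.} The map $\iota$ acts as the identity on the first two components and as matrix inversion on $\mathbb{S}^{C}_{++}$. Its differential is $d\mathbf{W}=-\mathbf{V}_{qq}^{-1}\,d\mathbf{V}_{qq}\,\mathbf{V}_{qq}^{-1}$. I will use the standard fact that the congruence $\mathbf{X}\mapsto\mathbf{A}\mathbf{X}\mathbf{A}^{\top}$ on $C\times C$ symmetric matrices has determinant $(\det\mathbf{A})^{C+1}$ in Lebesgue coordinates. This can be checked by reducing $\mathbf{A}$ to elementary and diagonal matrices, or by using the eigen/Cholesky decomposition. The sign flip does not affect the absolute value. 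With $\mathbf{A}=\mathbf{V}_{qq}^{-1}$ this gives $|\det D\iota|=(\det\mathbf{V}_{qq})^{-(C+1)}$.

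\emph{Second factor.} I order the variables of $\Phi$ as $(\mathbf{W},\mathbf{M},\boldsymbol{\Sigma}_{\mathrm{cond}})$ and the outputs as $(\boldsymbol{\Sigma}_{qq},\boldsymbol{\Sigma}_{pq},\boldsymbol{\Sigma}_{pp})$. The Jacobian is then block lower-triangular:
\begin{itemize}[leftmargin=*]
\item $\boldsymbol{\Sigma}_{qq}=\mathbf{W}$ depends only on $\mathbf{W}$, with identity block.
\item $\boldsymbol{\Sigma}_{pq}=\mathbf{M}\mathbf{W}$ depends on $(\mathbf{W},\mathbf{M})$. Its $\mathbf{M}$-block is the linear map $\mathbf{M}\mapsto\mathbf{M}\mathbf{W}$ on $\mathbb{R}^{p\times C}$. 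This map acts row by row as $p$ copies of right multiplication by $\mathbf{W}$, so its determinant is $(\det\mathbf{W})^{p}$.
\item $\boldsymbol{\Sigma}_{pp}$ enters $\boldsymbol{\Sigma}_{\mathrm{cond}}$ additively, so its $\boldsymbol{\Sigma}_{\mathrm{cond}}$-block is the identity.
\end{itemize}
The off-diagonal blocks do not contribute, so $|\det D\Phi|=(\det\mathbf{V}_{qq})^{-p}$.

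Multiplying the two factors gives $(\det\mathbf{V}_{qq})^{-(p+C+1)}$, and setting $p=3$ gives \cref{eq:dgs_conditional_coordinate_jacobian}. This is nonvanishing on the SPD interior, as claimed earlier.

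The main point of care is the first step. The exponent $C+1$ (rather than $2C$ or $C$) depends on using Lebesgue coordinates for the independent upper-triangular entries of a symmetric matrix. I would verify it for diagonal $\mathbf{A}$: entry $(i,j)$ with $i\le j$ scales by $a_ia_j$, and the product over $i\le j$ is $(\prod_i a_i)^{C+1}$. The general case then follows from multiplicativity in $\mathbf{A}$ together with the case of an orthogonal $\mathbf{A}$, which has unit determinant.
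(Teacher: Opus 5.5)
Your proposal is correct and takes essentially the same approach as the paper. The paper also splits the Jacobian into block-triangular pieces: SPD inversion contributes $(\det\mathbf{V}_{qq})^{-(C+1)}$, right multiplication $\mathbf{M}\mapsto\mathbf{M}\mathbf{V}_{qq}^{-1}$ contributes $(\det\mathbf{V}_{qq})^{-p}$, and the translation in $\boldsymbol{\Sigma}_{\mathrm{cond}}$ contributes one. Your explicit factorization $\Psi=\Phi\circ\iota$ and your diagonal-plus-orthogonal check of the $C+1$ exponent spell out steps that the paper cites as standard.
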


\begin{proof}
Factor the transformation into three block-triangular steps. First set
$\mathbf{Q}=\mathbf{V}_{qq}^{-1}$. Inversion on $\mathbb{S}_{++}^{C}$ has Jacobian
\begin{equation}
\left|\det\frac{\partial\mathbf{Q}}{\partial\mathbf{V}_{qq}}\right|
=(\det\mathbf{Q})^{C+1}=(\det\mathbf{V}_{qq})^{-(C+1)}.
\label{eq:spd_inverse_jacobian}
\end{equation}
Second, $\boldsymbol{\Sigma}_{pq}=\mathbf{M}\mathbf{Q}$. Right multiplication
of a $p\times C$ matrix by $\mathbf{Q}$ contributes
\begin{equation}
(\det\mathbf{Q})^{p}=(\det\mathbf{V}_{qq})^{-p}.
\label{eq:right_multiplication_jacobian}
\end{equation}
Finally,
$\boldsymbol{\Sigma}_{pp}=\boldsymbol{\Sigma}_{\mathrm{cond}}
+\mathbf{M}\mathbf{Q}\mathbf{M}^{\top}$ is a translation in the
$\boldsymbol{\Sigma}_{\mathrm{cond}}$ coordinates and contributes determinant one. Cross
derivatives between these steps lie below the corresponding diagonal Jacobian
blocks and do not change the determinant. Multiplying the diagonal factors gives
\cref{eq:conditional_coordinate_jacobian}.
\end{proof}

For every $\boldsymbol{\Sigma}_{\mathrm{cond}}\succ\mathbf{0}$ and
$\mathbf{V}_{qq}\succ\mathbf{0}$, the coordinate-volume factor is finite and
nonzero. Thus the conditional-coordinate map has no singularities in the
interior of its SPD domain. The domain retains the usual boundary behavior when
either SPD factor approaches its boundary, including when the conditional
covariance or corresponding query covariance approaches singularity. The
construction $\mathbf{V}_{qq}=\mathbf{L}\mathbf{L}^{\top}$ with a strictly
positive Cholesky diagonal keeps \dgs{} in the interior with respect to query
precision. This statement concerns the ambient
$(\boldsymbol{\Sigma}_{\mathrm{cond}},\mathbf{M},\mathbf{V}_{qq})$ chart, not
uniqueness of \dgs{}'s structured internal factorization of $\mathbf{M}$.

\paragraph{Interpretation.}
The rendered spatial covariance, query precision, and regression operator are
Gaussian regression coordinates. Direct conditioning optimizes these operational
quantities rather than repeatedly recovering them from an ambient covariance by
inversion and Schur complementation. The coupling is a block shear with an
explicit inverse and determinant. Within the structured dGS factorization,
$\mathbf{V}_{pq}$ supplies the spatial-scale prior, while
$\operatorname{diag}(\boldsymbol{\Lambda})\mathbf{V}_{qq}$ completes the
regression operator and exposes learnable coupling.

\section{Covariance-Derived versus Direct Regression Coordinates}
\label{sec:spatial_scaling_proof}

The conditional mean depends on the regression operator
$\boldsymbol{\Sigma}_{pq}\boldsymbol{\Sigma}_{qq}^{-1}$, not on the
cross-covariance $\boldsymbol{\Sigma}_{pq}$ alone. This section derives that
operator for N-DGS~\citep{gao20246dgs,gao20257dgs} and contrasts its implicit
covariance coordinates with the direct conditional coordinates of \dgs{}.
UBS~\citep{liu2025universal} inherits the same covariance-derived structure.

\paragraph{Block-Cholesky derivation.}
Partition the joint Cholesky factor into spatial and query blocks:
\begin{equation}
\mathcal{L}=
\begin{pmatrix}
\mathbf{A} & \mathbf{0}\\
\mathbf{B} & \mathbf{D}
\end{pmatrix},
\qquad
\mathbf{A}\in\mathbb{R}^{p\times p},\quad
\mathbf{B}\in\mathbb{R}^{C\times p},\quad
\mathbf{D}\in\mathbb{R}^{C\times C}.
\label{eq:ndgs_block_cholesky}
\end{equation}
Here $\mathbf{A}$ and $\mathbf{D}$ are lower triangular. Multiplication gives
\begin{equation}
\boldsymbol{\Sigma}
=\mathcal{L}\mathcal{L}^{\top}
=
\begin{pmatrix}
\mathbf{A}\mathbf{A}^{\top} & \mathbf{A}\mathbf{B}^{\top}\\
\mathbf{B}\mathbf{A}^{\top}
& \mathbf{B}\mathbf{B}^{\top}+\mathbf{D}\mathbf{D}^{\top}
\end{pmatrix},
\label{eq:ndgs_block_covariance}
\end{equation}
and therefore
\begin{equation}
\boldsymbol{\Sigma}_{pp}=\mathbf{A}\mathbf{A}^{\top},\qquad
\boldsymbol{\Sigma}_{pq}=\mathbf{A}\mathbf{B}^{\top},\qquad
\boldsymbol{\Sigma}_{qq}
=\mathbf{B}\mathbf{B}^{\top}+\mathbf{D}\mathbf{D}^{\top}.
\label{eq:ndgs_covariance_blocks}
\end{equation}
The actual N-DGS query-to-position regression operator is
\begin{equation}
\boxed{
\mathbf{M}_{\mathrm{N\text{-}DGS}}
=
\mathbf{A}\mathbf{B}^{\top}
\left(\mathbf{B}\mathbf{B}^{\top}
+\mathbf{D}\mathbf{D}^{\top}\right)^{-1}.}
\label{eq:ndgs_regression_operator}
\end{equation}
Thus $\mathbf{A}$ affects both spatial covariance and cross-covariance;
$\mathbf{B}$ affects both cross-covariance and query covariance; and
$\mathbf{D}$ also affects query covariance. The operational regression is
available only after forming and inverting the query block.

\paragraph{A valid cross-covariance bound.}
For spatial row $i$,
\begin{equation}
(\boldsymbol{\Sigma}_{pq})_{i,:}
=\mathbf{A}_{i,:}\mathbf{B}^{\top},
\qquad
\|(\boldsymbol{\Sigma}_{pq})_{i,:}\|_2
\leq \|\mathbf{A}_{i,:}\|_2\,\|\mathbf{B}\|_2.
\label{eq:ndgs_cross_covariance_row_bound}
\end{equation}
Moreover,
\begin{equation}
\|\mathbf{A}_{i,:}\|_2^2
=(\mathbf{A}\mathbf{A}^{\top})_{ii}
=(\boldsymbol{\Sigma}_{pp})_{ii}.
\end{equation}
In N-DGS, bounded Cholesky coefficients are multiplied by the query
diagonal scales $\mathbf{s}_q$. Thus
$\mathbf{B}=\operatorname{diag}(\mathbf{s}_q)\widetilde{\mathbf{B}}$
with $|\widetilde{B}_{jk}|\leq1$, giving
\begin{equation}
\|\mathbf{B}\|_2\leq\|\mathbf{B}\|_F\leq\sqrt{p}\,\|\mathbf{s}_q\|_2,
\end{equation}
so
\begin{equation}
\boxed{
\|(\boldsymbol{\Sigma}_{pq})_{i,:}\|_2
\leq \sqrt{p}\,\|\mathbf{s}_q\|_2\,
\sqrt{(\boldsymbol{\Sigma}_{pp})_{ii}}.}
\label{eq:ndgs_cross_covariance_invariant_bound}
\end{equation}
The off-diagonal entries of $\mathbf{A}$ contribute both to
$\boldsymbol{\Sigma}_{pq}$ and to the actual marginal spatial extent
$(\boldsymbol{\Sigma}_{pp})_{ii}$. They should therefore not be interpreted as
producing cross-covariance independently of spatial scale.

\paragraph{Cross-covariance is not displacement.}
The rendered N-DGS shift is
\begin{equation}
\Delta\boldsymbol{\mu}_{\mathrm{N\text{-}DGS}}
=
\boldsymbol{\Sigma}_{pq}\boldsymbol{\Sigma}_{qq}^{-1}
\boldsymbol{\delta}
=
\mathbf{M}_{\mathrm{N\text{-}DGS}}\boldsymbol{\delta}.
\label{eq:ndgs_position_shift}
\end{equation}
A bound on $\boldsymbol{\Sigma}_{pq}$ alone therefore does not bound the shift:
the inverse query covariance and the query deviation also enter. In Euclidean
coordinates, a narrow or ill-conditioned query covariance can amplify the
regression operator.

\paragraph{Invariant Gaussian bound.}
A basis-independent statement follows from the normalized cross-covariance
\begin{equation}
\mathbf{K}
=
\boldsymbol{\Sigma}_{pp}^{-1/2}
\boldsymbol{\Sigma}_{pq}
\boldsymbol{\Sigma}_{qq}^{-1/2}.
\label{eq:canonical_correlation_matrix}
\end{equation}
Positive-definiteness of the joint covariance implies
$\|\mathbf{K}\|_2<1$ by the Schur complement. With the whitened query deviation
$\mathbf{h}=\boldsymbol{\Sigma}_{qq}^{-1/2}\boldsymbol{\delta}$,
\begin{equation}
\Delta\boldsymbol{\mu}_{\mathrm{N\text{-}DGS}}
=
\boldsymbol{\Sigma}_{pp}^{1/2}\mathbf{K}\mathbf{h},
\end{equation}
and hence
\begin{equation}
\boxed{
\left\|
\boldsymbol{\Sigma}_{pp}^{-1/2}
\Delta\boldsymbol{\mu}_{\mathrm{N\text{-}DGS}}
\right\|_2
\leq
\left\|
\boldsymbol{\Sigma}_{qq}^{-1/2}\boldsymbol{\delta}
\right\|_2.}
\label{eq:ndgs_invariant_shift_bound}
\end{equation}
Measured in marginal spatial standard-deviation units, the shift is bounded
by the query deviation in query standard-deviation units, irrespective of
basis or Cholesky ordering.

\paragraph{Direct regression coordinates in \dgs{}.}
The direct formulation parameterizes
\begin{equation}
\mathbf{V}_{qq}=\mathbf{L}\mathbf{L}^{\top},\qquad
\mathbf{M}_{\mathrm{dGS}}
=
\mathbf{V}_{pq}\operatorname{diag}(\boldsymbol{\Lambda})
\mathbf{V}_{qq},\qquad
\boldsymbol{\Sigma}_{\mathrm{cond}}
\ \text{independently}.
\label{eq:dgs_direct_regression_coordinates}
\end{equation}
Its spatial coupling factor is normalized as
\begin{equation}
\mathbf{V}_{pq}=\bar{s}\boldsymbol{\Theta}_{pq},\qquad
\|\boldsymbol{\Theta}_{pq}\|_F=1,
\end{equation}
which gives
\begin{equation}
\|\mathbf{V}_{pq}\|_F=\bar{s},
\qquad
\|\mathbf{V}_{pq}^{(i)}\|_2\leq\bar{s}.
\label{eq:dgs_coupling_factor_bound}
\end{equation}
This is an explicit spatial-scale prior on the coupling factor, not a bound on
the effective regression or rendered shift. Those additionally depend on
$\operatorname{diag}(\boldsymbol{\Lambda})\mathbf{V}_{qq}$ and
$\boldsymbol{\delta}$.

The conditional-coordinate theorem in
Appendix~\ref{sec:conditional_coordinates} makes the comparison precise. N-DGS
indirectly parameterizes conditional covariance, regression, and query precision
through the ambient joint Cholesky factor. \dgs{} makes the corresponding
operational quantities explicit and exposes a learnable coupling strength,
without forming a query-block inverse during slicing. The benefit is this
computational and parameterization separation, rather than an invalid geometric
bound for N-DGS.

Empirically, removing the spatial factor $\bar{s}$ degrades PSNR by 2.65 dB on
NeRF Synthetic and 2.97 dB on 6DGS-PBR
(Table~\ref{tab:ablation}), supporting the spatial-scale prior on
$\mathbf{V}_{pq}$.
On two trained MCMC scenes with six views each, opacity-weighted
$\|\Delta\boldsymbol{\mu}\|/\bar{s}$ has median 0.29--0.30 and 99th
percentile 4.8--7.0 for N-DGS, versus 2.9--4.8 at the 99th percentile for
\dgs{}. The ordering reverses when normalized by the smallest principal
axis, supporting $\bar{s}$ as an isotropic prior rather than a per-axis optimum.

\section{Results with Standard Densification}
\label{sec:detailed_results}

The main paper reports MCMC densification~\citep{kheradmand20243dgaussiansplattingmarkov}, the protocol used by UBS originally. Here we additionally evaluate the Gaussian-kernel pair (\dgs{} vs.\ N-DGS) under standard 3DGS densification~\citep{3dgs}, the protocol used by N-DGS originally. For dynamic scenes, these standard-densification runs use batch size 4, following the original N-DGS protocol. Table~\ref{tab:summary_results} summarizes quality (PSNR, SSIM, LPIPS) and efficiency metrics across all five datasets: three static (NeRF Synthetic~\citep{mildenhall2021nerf}, Mip-NeRF 360~\citep{barron2022mipnerf360}, 6DGS-PBR~\citep{gao20246dgs}) and two dynamic (D-NeRF~\citep{pumarola2020dnerfneuralradiancefields}, 7DGS-PBR~\citep{gao20257dgs}).

\begin{table*}[h]
\centering
\caption{Standard 3DGS densification: scene means from one run per scene (\#Gauss in K, training time in minutes; dynamic batch size 4). FPS$_{\text{CUDA}}$ is an arithmetic scene mean. For 3DGS~\citep{3dgs} and 4DGS~\citep{yang2023real} comparisons, see the original N-DGS~\citep{gao20246dgs,gao20257dgs}.}
\label{tab:summary_results}
\resizebox{0.85\textwidth}{!}{%
\begin{tabular}{@{}ccl|rrr|ccc@{}}
\toprule
& Dataset & Method & \#Gauss & Time $\downarrow$ & FPS$_{\text{CUDA}}$ $\uparrow$ & PSNR $\uparrow$ & SSIM $\uparrow$ & LPIPS $\downarrow$ \\
\midrule
\multirow{9}{*}{\rotatebox{90}{STATIC}}
& \multirow{3}{*}{\makecell{\texttt{NeRF}\\\texttt{Synthetic}}}
& N-DGS & \bf 171 & 6.35 & 615.57 & 33.38 & 0.970 & 0.032 \\
&& \textit{d}GS-O (\textbf{Ours}) & 201 & \bf 5.83 & \bf 754.98 & 33.84 & 0.971 & 0.030 \\
&& \textit{d}GS (\textbf{Ours}) & 201 & 6.84 & 704.84 & \bf 33.91 & \bf 0.972 & \bf 0.029 \\
\cmidrule{2-9}
& \multirow{3}{*}{\makecell{\texttt{Mip-NeRF}\\\texttt{360}}}
& N-DGS & \bf 1273 & 27.09 & 147.46 & 26.40 & 0.739 & 0.316 \\
&& \textit{d}GS-O (\textbf{Ours}) & 2363 & \bf 25.58 & \bf 279.30 & 27.18 & 0.779 & 0.252 \\
&& \textit{d}GS (\textbf{Ours}) & 2463 & 28.21 & 237.82 & \bf 27.37 & \bf 0.781 & \bf 0.251 \\
\cmidrule{2-9}
& \multirow{3}{*}{\texttt{6DGS-PBR}}
& N-DGS & \bf 84 & \bf 26.77 & 678.40 & 37.11 & 0.971 & 0.054 \\
&& \textit{d}GS-O (\textbf{Ours}) & 118 & 30.72 & \bf 696.31 & 36.54 & 0.965 & 0.059 \\
&& \textit{d}GS (\textbf{Ours}) & 134 & 33.80 & 653.44 & \bf 37.75 & \bf 0.972 & \bf 0.052 \\

\midrule
\midrule
\multirow{6}{*}{\rotatebox{90}{DYNAMIC}}
& \multirow{3}{*}{\texttt{D-NeRF}}
& N-DGS & \bf 69 & 14.08 & 852.01 & 33.13 & 0.968 & 0.038 \\
&& \textit{d}GS-O (\textbf{Ours}) & 70 & \bf 13.20 & \bf 948.77 & 30.92 & 0.959 & 0.044 \\
&& \textit{d}GS (\textbf{Ours}) & 85 & 15.56 & 818.41 & \bf 33.59 & \bf 0.971 & \bf 0.032 \\
\cmidrule{2-9}
& \multirow{3}{*}{\texttt{7DGS-PBR}}
& N-DGS & \bf 162 & 38.14 & 709.83 & 31.75 & 0.955 & 0.055 \\
&& \textit{d}GS-O (\textbf{Ours}) & 227 & \bf 34.58 & \bf 835.24 & 31.70 & 0.955 & 0.053 \\
&& \textit{d}GS (\textbf{Ours}) & 226 & 36.96 & 753.77 & \bf 32.18 & \bf 0.956 & \bf 0.050 \\
\bottomrule
\end{tabular}}
\vspace{-0.5em}
\end{table*}

\textbf{Static Scenes.}
On NeRF Synthetic, both variants improve quality: \textit{d}GS-O by +0.46 dB and \textit{d}GS by +0.53 dB, with rendering speedups of 23\% and 15\% respectively. On Mip-NeRF 360, efficiency gains are most pronounced: \textit{d}GS-O achieves 89\% faster rendering with +0.78 dB improvement, while \textit{d}GS achieves 61\% faster rendering with +0.97 dB improvement. Notably, \textit{d}GS-O uses 86\% more primitives (2363K vs 1273K) yet remains faster, demonstrating that per-primitive computation cost dominates over primitive count. On 6DGS-PBR, \textit{d}GS-O shows modest 3\% speedup but -0.57 dB quality drop, while \textit{d}GS recovers and exceeds baseline quality (+0.64 dB).

\textbf{Dynamic Scenes.}
Position conditioning proves essential for temporal dynamics. On D-NeRF, \textit{d}GS-O suffers -2.21 dB quality drop despite 11\% faster rendering, while \textit{d}GS with learnable coupling via $\Lambda$ exceeds N-DGS by +0.46 dB, demonstrating adaptive coupling surpasses fixed coupling. On 7DGS-PBR combining dynamics with angular lighting, \textit{d}GS-O achieves 18\% faster rendering with minimal quality loss (-0.05 dB), while \textit{d}GS improves quality by +0.43 dB with 6\% faster rendering.

\begin{table*}[bp]
  \centering
  \caption{\small{Detailed quantitative comparison across all datasets at 30K iterations. N-DGS, dGS-O, and dGS are compared on static benchmarks (NeRF Synthetic, Mip-NeRF 360, 6DGS-PBR) and dynamic benchmarks (7DGS-PBR, D-NeRF).}} %
  \resizebox{1.0\linewidth}{!}{
        \begin{tabular}{@{\hskip 2pt}cc|@{\hskip 2pt}l@{\hskip 2pt}|@{\hskip 2pt}r@{\hskip 2pt}r@{\hskip 2pt}r|@{\hskip 2pt}r@{\hskip 2pt}r@{\hskip 2pt}r|@{\hskip 2pt}r@{\hskip 2pt}r@{\hskip 2pt}r@{\hskip 2pt}}

    \toprule

    \multicolumn{2}{@{\hskip 2pt}c|@{\hskip 2pt}}{\multirow{2}[4]{*}{Dataset}} & \multicolumn{1}{c|@{\hskip 2pt}}{\multirow{2}[4]{*}{Scene}}
        & \multicolumn{3}{c|@{\hskip 2pt}}{N-DGS}
        & \multicolumn{3}{c|@{\hskip 2pt}}{\textit{d}GS-O (\textbf{Ours})}
        & \multicolumn{3}{@{\hskip 2pt}c}{\textit{d}GS (\textbf{Ours})} \\
\cmidrule{4-12}
          &&       & \multicolumn{1}{@{\hskip 2pt}c@{\hskip 2pt}}{PSNR$\uparrow$}
                  & \multicolumn{1}{@{\hskip 2pt}c@{\hskip 2pt}}{SSIM$\uparrow$}
                  & \multicolumn{1}{c|@{\hskip 2pt}}{LPIPS$\downarrow$}
                  & \multicolumn{1}{@{\hskip 2pt}c@{\hskip 2pt}}{PSNR$\uparrow$}
                  & \multicolumn{1}{@{\hskip 2pt}c@{\hskip 2pt}}{SSIM$\uparrow$}
                  & \multicolumn{1}{c|@{\hskip 2pt}}{LPIPS$\downarrow$}
                  & \multicolumn{1}{@{\hskip 2pt}c@{\hskip 2pt}}{PSNR$\uparrow$}
                  & \multicolumn{1}{@{\hskip 2pt}c@{\hskip 2pt}}{SSIM$\uparrow$}
                  & \multicolumn{1}{@{\hskip 2pt}c@{\hskip 2pt}}{LPIPS$\downarrow$} \\
    \midrule

\multicolumn{1}{@{\hskip 2pt}c}{\multirow{30}[4]{*}{\begin{sideways}STATIC SCENES\end{sideways}}}

& \multicolumn{1}{c|@{\hskip 2pt}}{\multirow{9}[4]{*}{\begin{sideways}NeRF Synthetic\end{sideways}}}
    & \texttt{chair}     & 35.75 & 0.986 & 0.013 & \bf 36.36 & \bf 0.988 & \bf 0.012 &  36.26 & \bf 0.988 & \bf 0.012 \\
    & & \texttt{drums}     & 26.75 & 0.956 & 0.036 & 26.80 & 0.957 & 0.035 & \bf 26.92 & \bf 0.958 & \bf 0.034 \\
    & & \texttt{ficus}     & 33.54 & 0.984 & 0.014 & \bf 35.30 & \bf 0.988 & \bf 0.011 & 35.03 & 0.988 & \bf 0.011 \\
    & & \texttt{hotdog}    & 38.10 & 0.985 & 0.022 & \bf 38.13 & \bf 0.986 & \bf 0.020 & \bf 38.15 & \bf 0.986 & \bf 0.020 \\
    & & \texttt{lego}      & 35.63 & 0.981 & 0.019 & \bf 35.87 & \bf 0.982 & \bf 0.017 & 35.79 & \bf 0.982 & \bf 0.017 \\
    & & \texttt{materials} & 30.78 & 0.967 & 0.033 & 31.00 & 0.967 & 0.030 & \bf 31.76 & \bf 0.972 & \bf 0.027 \\
    & & \texttt{mic}       & 35.63 & 0.992 & 0.006 & 36.12 & \bf 0.993 & \bf 0.005 & \bf 36.25 & \bf 0.993 & \bf 0.005 \\
    & & \texttt{ship}      & 30.85 & 0.906 & 0.112 & \bf 31.16 & \bf 0.908 & 0.107 & 31.13 & \bf 0.908 & \bf 0.106 \\
\cmidrule{3-12}
    & & \texttt{avg}
      & 33.38
      & 0.970
      & 0.032
      & 33.84
      & 0.971
      & 0.030
      & \bf 33.91
      & \bf 0.972
      & \bf 0.029 \\
    \cmidrule{2-12}

& \multicolumn{1}{c|@{\hskip 2pt}}{\multirow{10}[4]{*}{\begin{sideways}Mip-NeRF 360\end{sideways}}}
        & \texttt{bicycle}   & 22.33 & 0.554 & 0.447 & 23.49 & 0.644 & 0.327 & \bf 23.51 & \bf 0.648 & \bf 0.318 \\
        & & \texttt{bonsai}    & 33.20 & 0.940 & 0.208 & 33.22 & 0.949 & 0.180 & \bf 33.46 & \bf 0.951 & \bf 0.175 \\
        & & \texttt{counter}   & 29.26 & 0.895 & 0.236 & 29.73 & 0.917 &  0.185 & \bf 29.93 & \bf 0.920 & \bf 0.181 \\
        & & \texttt{flowers}   & 18.90 & 0.409 & 0.507 &  19.61 & \bf 0.453 & \bf 0.443 & \bf 19.71 & 0.452 & 0.455 \\
        & & \texttt{garden}    & 26.19 & 0.809 & 0.205 & 26.77 & 0.842 &  0.145 & \bf 27.13 & \bf 0.849 & \bf 0.137 \\
        & & \texttt{kitchen}   & 31.17 & 0.922 & 0.137 & 31.72 & 0.930 &  0.121 & \bf 32.04 & \bf 0.933 & \bf 0.116 \\
        & & \texttt{room}      & 31.60 & 0.910 & 0.243 & 31.83 & 0.927 & 0.199 & \bf 32.25 & \bf 0.929 & \bf 0.195 \\
        & & \texttt{stump}     & 23.32 & 0.638 & 0.413 & \bf 26.01 &  0.750 & \bf 0.275 & 26.00 & \bf 0.742 & 0.295 \\
        & & \texttt{treehill}  & 21.66 & 0.571 & 0.448 & 22.21 & 0.603 & 0.393 & \bf 22.31 & \bf 0.606 & \bf 0.389 \\
    \cmidrule{3-12}
        & & \texttt{avg}
          & 26.40
          & 0.739
          & 0.316
          & 27.18
          & 0.779
          & 0.252
          & \bf 27.37
          & \bf 0.781
          & \bf 0.251 \\
\cmidrule{2-12}

& \multicolumn{1}{c|@{\hskip 2pt}}{\multirow{8}[4]{*}{\begin{sideways}6DGS-PBR\end{sideways}}}
    & \texttt{bunny} & 40.87 & 0.993 & 0.045 & 43.44 & 0.993 & 0.042 & \bf 44.14 & \bf 0.993 & \bf 0.040 \\
    & & \texttt{cloud}     & 41.93 & 0.991 & 0.052 & 43.47 & 0.992 & 0.048 & \bf 43.53 & \bf 0.992 & \bf 0.045 \\
    & & \texttt{dragon}    &  35.69 & 0.947 & 0.039 & 34.42 & 0.914 & 0.051 &  \bf 36.68 &  \bf 0.947 & \bf 0.039 \\
    & & \texttt{explosion} & \bf 40.97 & \bf 0.989 & \bf 0.037 & 37.81 & 0.983 & 0.041 & 38.95 & 0.985 & 0.039 \\
    & & \texttt{ct-scan}     & 33.48 & 0.964 & 0.057 & 31.81 & 0.956 & 0.067 & \bf 34.62 & \bf 0.970 & \bf 0.048 \\
    & & \texttt{smoke}     & \bf 40.02 & \bf 0.990 & \bf 0.041 & 38.20 & 0.989 & 0.049 & 39.28 & \bf 0.990 & 0.047 \\
    & & \texttt{suzanne}   & 26.81 & 0.929 & \bf 0.106 & 26.65 & 0.927 & 0.112 & \bf 27.03 & \bf 0.930 & 0.107 \\
\cmidrule{3-12}
    & & \texttt{avg}
      & 37.11
      & 0.971
      & 0.054
      & 36.54
      & 0.965
      & 0.059
      & \bf 37.75
      & \bf 0.972
      & \bf 0.052 \\

    \midrule
    \midrule

\multicolumn{1}{@{\hskip 2pt}c}{\multirow{16}[4]{*}{\begin{sideways}DYNAMIC SCENES\end{sideways}}}

& \multirow{7}{*}{\begin{sideways}7DGS-PBR\end{sideways}}
    & \texttt{cloud}    & 29.28 & 0.955 & 0.075 & 29.34 & 0.955 & 0.075 & \bf 29.82 & \bf 0.955 & \bf 0.074 \\
    & & \texttt{dust}     & 36.85 & 0.955 & 0.038 & 36.87 & 0.955 &  0.037 & \bf 38.39 & \bf 0.958 & \bf 0.031 \\
    & & \texttt{flame}    & \bf 31.64 & \bf 0.937 & \bf 0.062 & 30.38 & 0.933 & 0.066 & 31.52 & 0.936 & 0.060 \\
    & & \texttt{heart1}   & 34.66 & 0.983 & 0.023 & 35.14 & \bf 0.985 & 0.021 &  \bf 35.25 & \bf 0.986 & \bf 0.018 \\
    & & \texttt{heart2}   & 30.99 & 0.959 & 0.057 & \bf 31.29 & \bf 0.962 & 0.052 & 31.01 & \bf 0.962 & \bf 0.049 \\
    & & \texttt{suzanne}  & 27.09 & \bf 0.941 & 0.072 & \bf 27.20 & 0.940 & \bf 0.069 & 27.07 & 0.940 & 0.070 \\
    \cmidrule{3-12}
        & & \texttt{avg}
          & 31.75
          & 0.955
          & 0.055
          & 31.70
          & 0.955
          & 0.053
          & \bf 32.18
          & \bf 0.956
          & \bf 0.050 \\
    \cmidrule{2-12}

& \multirow{9}{*}{\begin{sideways}D-NeRF\end{sideways}}
        & \texttt{b.balls}  & \bf 34.00 & \bf 0.981 & 0.025 & 32.52 & 0.978 & 0.028 &  33.59 & \bf 0.981 & \bf 0.023 \\
        & & \texttt{h.warrior}& 32.76 & 0.932 & 0.093 & 33.54 & 0.941 & 0.085 & \bf 34.64 & \bf 0.947 & \bf 0.073 \\
        & & \texttt{hook}     & 31.15 & 0.960 & 0.044 & 28.55 & 0.937 & 0.062 & \bf 32.94 & \bf 0.970 & \bf 0.030 \\
        & & \texttt{j.jacks}  & \bf 31.22 & \bf 0.967 & 0.043 & 28.62 & 0.952 & 0.054 & 30.86 & 0.965 & \bf 0.039 \\
        & & \texttt{lego}     & \bf 28.33 & \bf 0.943 & 0.055 & 27.13 & 0.939 &  0.055 & 28.06 & 0.942 & \bf 0.054 \\
        & & \texttt{mutant}   & 39.38 & 0.993 & 0.007 & 34.55 & 0.981 & 0.019 & \bf 39.84 & \bf 0.993 & \bf 0.006 \\
        & & \texttt{standup}  & 38.50 & 0.988 & 0.013 & 33.02 & 0.972 & 0.029 & \bf 38.76 & \bf 0.990 & \bf 0.009 \\
        & & \texttt{trex}     & 29.68 & 0.978 & 0.023 & 29.42 & 0.975 & 0.022 & \bf 30.08 & \bf 0.978 & \bf 0.021 \\
    \cmidrule{3-12}
        & & \texttt{avg}
          & 33.13
          & 0.968
          & 0.038
          & 30.92
          & 0.959
          & 0.044
          & \bf 33.59
          & \bf 0.971
          & \bf 0.032 \\
    \bottomrule
    \end{tabular}
    }
  \label{tab:detailed_combined}
\vspace{-1em}
\end{table*}

\section{UBS Formulation}
\label{sec:dbs}

The released UBS implementation used in our experiments evaluates
$\beta_i=4\exp(\beta_i^{\mathrm{raw}})$ and
$\boldsymbol{\Lambda}_\beta=\min(\boldsymbol{\beta}/4,\mathbf{1})$.
Its covariance-derived slicing is
\begin{align}
\mathbf{z}^{\text{UBS}} &= \hat{\mathbf{L}}_q^{-1} \boldsymbol{\delta}, \quad d_i = \tanh\!\bigl((z_i^{\text{UBS}})^2\bigr), \\
\alpha_{\text{cond}} &= \alpha \prod_{i=1}^{C} (1 - d_i)^{\beta_i}, \\
\boldsymbol{\mu}_{\text{cond}} &= \boldsymbol{\mu}_p + \boldsymbol{\Sigma}_{pq} \boldsymbol{\Sigma}_{qq}^{-1} \operatorname{diag}(\boldsymbol{\Lambda}_\beta) \boldsymbol{\delta}, \\
\boldsymbol{\Sigma}_{\text{cond}} &= \boldsymbol{\Sigma}_{pp} - \boldsymbol{\Sigma}_{pq} \boldsymbol{\Sigma}_{qq}^{-1} \operatorname{diag}(\boldsymbol{\Lambda}_\beta) \boldsymbol{\Sigma}_{qp},
\end{align}
where $\hat{\mathbf{L}}_q\hat{\mathbf{L}}_q^\top=\boldsymbol{\Sigma}_{qq}$.
The \texttt{tanh} values are capped below one for finite opacity evaluation.
\dbs{} (\cref{eq:dbs_alpha,eq:dbs_mu}) retains the same opacity decay,
bandwidth activation, and clipped coupling. It replaces the triangular solve
with $\mathbf{L}^\top\boldsymbol{\delta}$, replaces the covariance-derived
regression with $\mathbf{V}_{pq}\operatorname{diag}(\boldsymbol{\Lambda}_\beta)\mathbf{V}_{qq}$,
and directly stores $\boldsymbol{\Sigma}_{\mathrm{cond}}$.

\section{Parameterization and Evaluation Controls}
\label{app:robustness}

\textbf{Unrestricted direct regression.}
To distinguish direct readout from its structured parameterization, we replace
$\mathbf{V}_{pq}\operatorname{diag}(\boldsymbol{\Lambda})\mathbf{V}_{qq}$
with a free matrix $\mathbf{M}\in\mathbb{R}^{3\times C}$, retaining the directly
stored conditional covariance and full query precision. This
Direct-Unrestricted (DU) variant removes spatial normalization and bounded
coupling. Its matrix learning rate is $5\times10^{-6}$, scaled by the median
primitive size to account for the different parameter units and then fixed
across scenes.
Table~\ref{tab:du_control} shows that unrestricted direct coordinates alone do
not explain the quality gains: the structured parameterization performs better
on both datasets. This comparison evaluates the structure and optimization
coordinates collectively.

\begin{table}[!htbp]
\centering
\caption{Direct-regression control under standard densification. Dataset-mean
PSNR (dB), one training run per scene.}
\label{tab:du_control}
\begin{tabular}{lrrr}
\toprule
Dataset & N-DGS & DU & \dgs{} \\
\midrule
NeRF Synthetic & 33.38 & 32.91 & \textbf{33.91} \\
6DGS-PBR & 37.11 & 36.44 & \textbf{37.75} \\
\bottomrule
\end{tabular}
\vspace{-1em}
\end{table}

\textbf{Variation across training seeds.}
Table~\ref{tab:seed_controls} summarizes three matched seeds for the smallest
dBS/UBS margin and for the coupling ablation. The dBS/UBS difference remains
quality parity, while learned coupling ranks above both fixed endpoints in
all three runs. These repeated-seed results concern NeRF Synthetic; the other
benchmark comparisons retain the single-run protocol.

\begin{table}[!htbp]
\centering
\caption{NeRF Synthetic PSNR (dB): mean and sample standard deviation of
dataset means across three training seeds.}
\label{tab:seed_controls}
\begin{tabular}{llr}
\toprule
Protocol & Method & PSNR \\
\midrule
MCMC & UBS & $34.50\pm0.02$ \\
& \dbs{} & $34.53\pm0.02$ \\
\midrule
Standard & $\boldsymbol{\Lambda}=\mathbf{0}$ & $33.83\pm0.02$ \\
& $\boldsymbol{\Lambda}=\mathbf{1}$ & $33.78\pm0.01$ \\
& Learned $\boldsymbol{\Lambda}$ & $33.92\pm0.02$ \\
\bottomrule
\end{tabular}
\vspace{-1em}
\end{table}

\textbf{Batch size.}
On 7DGS-PBR, using batch size four for both Beta methods
gives 32.36 dB for UBS and 33.63 dB for \dbs{}, a 1.27 dB improvement on the
same joint-RGB metric as the main table. The ordering holds on all six scenes,
consistent with the 1.26 dB gain at batch size one.

\FloatBarrier
\section{Conditioning and End-to-End Efficiency}
\label{app:efficiency-controls}

\textbf{Inference controls.}
Table~\ref{tab:cached_controls} compares fused CUDA kernels with inputs already
materialized. Cached N-DGS stores query precision, regression, and conditional
covariance; the factorized-query control uses
$\boldsymbol{\Sigma}_{qq}=\mathbf{R}\mathbf{S}^2\mathbf{R}^{\top}$ to obtain
the inverse analytically, while still deriving regression and conditional
covariance. Direct conditioning is 2.7--2.8$\times$ faster than on-the-fly
N-DGS, 2.0--2.3$\times$ faster than its cached form, and 1.8$\times$ faster
than the factorized-query control. These timings exclude production input
preparation and therefore differ from the production-slicing timings in
Table~\ref{tab:comparison_overview}. Persistent caching applies to inference;
training updates require refreshing the cached quantities.

\begin{table}[!htbp]
\centering
\caption{Slicing time (ms) for one million primitives on an A100. Medians of
100 trials after 20 warmups; inputs are materialized before timing.}
\label{tab:cached_controls}
\begin{tabular}{lrr}
\toprule
Parameterization & $C=3$ & $C=4$ \\
\midrule
N-DGS, on-the-fly & 0.224 & 0.269 \\
N-DGS, cached & 0.178 & 0.197 \\
N-DGS, factorized query & 0.141 & 0.180 \\
\dgs{}, direct & \textbf{0.079} & \textbf{0.100} \\
\bottomrule
\end{tabular}
\vspace{-1em}
\end{table}

\textbf{Frame-time decomposition.}
Slicing is a vertex-like preprocessing stage before projection, tile binning,
sorting, and rasterization. If its baseline fraction is $f$ and its speedup is
$s$, an unchanged remainder would give the Amdahl speedup
$[(1-f)+f/s]^{-1}$. In Table~\ref{tab:frame_decomposition}, slicing accounts
for 42--64\% of baseline frame time. The downstream code is shared, but the
learned primitives have different projected footprints and tile workloads:
the measured remainder is 1.10--1.68$\times$ slower for \dgs{}. The observed
end-to-end gains are consequently 1.25--1.91$\times$. These per-scene frame
times differ from arithmetic-mean FPS across all nine Mip-NeRF 360 scenes
(Table~\ref{tab:mcmc_comparison}).

\begin{table}[!htbp]
\centering
\caption{Frame-time decomposition (ms) on trained checkpoints using matched
training caps. Each pair lists N-DGS / \dgs{}; the remainder includes all
post-slicing stages.}
\label{tab:frame_decomposition}
\setlength{\tabcolsep}{5pt}
\begin{tabular}{lrrrr}
\toprule
Scene & Slicing & Remainder & Total & Speedup \\
\midrule
\texttt{lego} & 1.56 / 0.22 & 2.19 / 2.42 & 3.75 / 2.65 & 1.42$\times$ \\
\texttt{counter} & 7.38 / 0.57 & 8.77 / 12.18 & 16.15 / 12.75 & 1.27$\times$ \\
\texttt{garden} & 24.83 / 1.69 & 13.96 / 18.61 & 38.79 / 20.30 & 1.91$\times$ \\
\texttt{bicycle} & 29.78 / 2.02 & 24.71 / 41.59 & 54.49 / 43.61 & 1.25$\times$ \\
\bottomrule
\end{tabular}
\vspace{-1em}
\end{table}
\FloatBarrier

\section{Limitations and Societal Impact}
\label{sec:limitations}

\textbf{Limitations.}
The evaluation covers Gaussian and Beta kernels. Larger-scale scenes may
change memory, visibility, and partitioning bottlenecks; other kernels and
domains require validation.

\textbf{Societal impact.}
Faster neural rendering can benefit simulation, telepresence, robotics, and content creation, but can also facilitate unauthorized reconstruction of private spaces. This work introduces no new data sources or user-facing decision systems.

\textbf{Existing assets.}
All datasets and baseline methods used in this work are public research assets and are cited in the paper. We use NeRF Synthetic, Mip-NeRF 360, D-NeRF, 6DGS-PBR, and 7DGS-PBR only for academic evaluation and do not redistribute the datasets or baseline code.

\newpage
\section*{NeurIPS Paper Checklist}

\begin{enumerate}

\item {\bf Claims}
    \item[] Question: Do the main claims made in the abstract and introduction accurately reflect the paper's contributions and scope?
    \item[] Answer: \answerYes{}
    \item[] Justification: The abstract and introduction state the method, matched-pair scope, speedups, and quality claims. The experiments and appendix report the corresponding benchmark results, while limitations and scope are discussed in the conclusion and supplemental limitations section.

\item {\bf Limitations}
    \item[] Question: Does the paper discuss the limitations of the work performed by the authors?
    \item[] Answer: \answerYes{}
    \item[] Justification: The conclusion includes future work, and the supplemental Section~\ref{sec:limitations} discusses evaluation scope, lack of city-scale/outdoor driving-scale validation, and the need to validate additional kernels and domains.

\item {\bf Theory assumptions and proofs}
    \item[] Question: For each theoretical result, does the paper provide the full set of assumptions and a complete (and correct) proof?
    \item[] Answer: \answerYes{}
    \item[] Justification: Appendices~\ref{sec:conditional_coordinates} and~\ref{sec:spatial_scaling_proof} state assumptions and prove the conditional-coordinate correspondence and Gaussian displacement bound.

    \item {\bf Experimental result reproducibility}
    \item[] Question: Does the paper fully disclose all the information needed to reproduce the main experimental results of the paper to the extent that it affects the main claims and/or conclusions of the paper (regardless of whether the code and data are provided or not)?
    \item[] Answer: \answerYes{}
    \item[] Justification: The experimental setup specifies datasets, metrics, evaluation protocol, optimizer, iteration count, densification protocol, batch-size differences for dynamic MCMC results, initialization, hardware, and CUDA/rasterizer implementation details.

\item {\bf Open access to data and code}
    \item[] Question: Does the paper provide open access to the data and code, with sufficient instructions to faithfully reproduce the main experimental results, as described in supplemental material?
    \item[] Answer: \answerNo{}
    \item[] Justification: The experiments use cited public benchmark datasets, and the paper provides implementation details. Public code and complete reproduction scripts are not provided.

\item {\bf Experimental setting/details}
    \item[] Question: Does the paper specify all the training and test details (e.g., data splits, hyperparameters, how they were chosen, type of optimizer) necessary to understand the results?
    \item[] Answer: \answerYes{}
    \item[] Justification: The experimental setup describes the datasets, evaluation protocol, metrics, training pipeline, optimizer, iteration count, densification settings, batch size for dynamic MCMC runs, and parameter initialization. Additional standard-densification details and per-scene results are provided in the appendix.

\item {\bf Experiment statistical significance}
    \item[] Question: Does the paper report error bars suitably and correctly defined or other appropriate information about the statistical significance of the experiments?
    \item[] Answer: \answerYes{}
    \item[] Justification: Appendix~\ref{app:robustness} reports means and sample standard deviations across three matched seeds for the NeRF Synthetic dBS/UBS comparison and coupling ablation. Other benchmark results are single-run scene averages; the repeated runs do not establish variability across all five datasets.

\item {\bf Experiments compute resources}
    \item[] Question: For each experiment, does the paper provide sufficient information on the computer resources (type of compute workers, memory, time of execution) needed to reproduce the experiments?
    \item[] Answer: \answerYes{}
    \item[] Justification: The paper states that experiments use a single NVIDIA A100 GPU and reports training time, Gaussian count, and rendering speeds in the main and appendix tables. Kernel benchmarks also report slicing-only forward-pass timing on 1M Gaussians.
    
\item {\bf Code of ethics}
    \item[] Question: Does the research conducted in the paper conform, in every respect, with the NeurIPS Code of Ethics \url{https://neurips.cc/public/EthicsGuidelines}?
    \item[] Answer: \answerYes{}
    \item[] Justification: The work uses public benchmark datasets and does not involve human subjects, private data collection, or deployment in user-facing decision systems. The paper also discusses societal impact in the supplement.

\item {\bf Broader impacts}
    \item[] Question: Does the paper discuss both potential positive societal impacts and negative societal impacts of the work performed?
    \item[] Answer: \answerYes{}
    \item[] Justification: Supplemental Section~\ref{sec:limitations} discusses beneficial uses such as simulation, telepresence, robotics, and content creation, as well as potential misuse such as unauthorized reconstruction of private spaces.
    
\item {\bf Safeguards}
    \item[] Question: Does the paper describe safeguards that have been put in place for responsible release of data or models that have a high risk for misuse (e.g., pre-trained language models, image generators, or scraped datasets)?
    \item[] Answer: \answerNA{}
    \item[] Justification: The paper does not release a pretrained generative model, scraped dataset, or other high-risk asset requiring controlled release safeguards.

\item {\bf Licenses for existing assets}
    \item[] Question: Are the creators or original owners of assets (e.g., code, data, models), used in the paper, properly credited and are the license and terms of use explicitly mentioned and properly respected?
    \item[] Answer: \answerNo{}
    \item[] Justification: The original datasets, baseline methods, and rasterizer are cited, and Section~\ref{sec:limitations} describes their academic use without redistribution. The paper does not itemize the license and terms of use of every asset.

\item {\bf New assets}
    \item[] Question: Are new assets introduced in the paper well documented and is the documentation provided alongside the assets?
    \item[] Answer: \answerNA{}
    \item[] Justification: The paper introduces a method and describes its CUDA implementation, but does not release a new dataset or pretrained model.

\item {\bf Crowdsourcing and research with human subjects}
    \item[] Question: For crowdsourcing experiments and research with human subjects, does the paper include the full text of instructions given to participants and screenshots, if applicable, as well as details about compensation (if any)? 
    \item[] Answer: \answerNA{}
    \item[] Justification: The paper does not involve crowdsourcing or research with human subjects.

\item {\bf Institutional review board (IRB) approvals or equivalent for research with human subjects}
    \item[] Question: Does the paper describe potential risks incurred by study participants, whether such risks were disclosed to the subjects, and whether Institutional Review Board (IRB) approvals (or an equivalent approval/review based on the requirements of your country or institution) were obtained?
    \item[] Answer: \answerNA{}
    \item[] Justification: The paper does not involve crowdsourcing or human-subject experiments, so IRB approval is not applicable.

\item {\bf Declaration of LLM usage}
    \item[] Question: Does the paper describe the usage of LLMs if it is an important, original, or non-standard component of the core methods in this research? Note that if the LLM is used only for writing, editing, or formatting purposes and does \emph{not} impact the core methodology, scientific rigor, or originality of the research, declaration is not required.
    \item[] Answer: \answerNA{}
    \item[] Justification: LLMs are not a component of the proposed representation, conditioning equations, training objective, or evaluation method.

\end{enumerate}

\end{document}